\newcommand{\comment}[1]{}
\newtheorem{definition}{Definition}
\newtheorem{theorem}{Theorem}
\begin{document}

\title{Metareasoning for Planning Under Uncertainty}

\comment{
\author{
Christopher H. Lin \thanks{Research was performed while the author was an intern at Microsoft Research.}\\
University of Washington\\
Seattle, WA\\
chrislin@cs.washington.edu \And
Andrey Kolobov \hspace{6px} Ece Kamar \hspace{6px} Eric Horvitz\\
Microsoft Research\\
Redmond, WA\\
\{akolobov,eckamar,horvitz\}@microsoft.com }
}

\author{
 \\
 \vspace{0.5in}
{\centering
\begin{tabular}{cc}
\textbf{Christopher H. Lin} \thanks{Research was performed while the author was an intern at Microsoft Research.} & \textbf{Andrey Kolobov  \hspace{25px} Ece Kamar \hspace{25px} Eric Horvitz}\\
 University of Washington &  Microsoft Research\\
Seattle, WA & Redmond, WA\\
chrislin@cs.washington.edu & \{akolobov,eckamar,horvitz\}@microsoft.com 
\end{tabular}
}
}

\maketitle

\begin{abstract}

The conventional model for online planning under uncertainty assumes that an agent can stop and plan without incurring costs for the time spent planning.  However, planning time is not free in most real-world settings. For example, an autonomous drone is subject to nature's forces, like gravity, even while it thinks, and must either pay a price for counteracting these forces to stay in place, or grapple with the state change caused by acquiescing to them. Policy optimization in these settings requires \emph{metareasoning}---a process that trades off the cost of planning and the potential policy improvement that can be achieved. We formalize and analyze the metareasoning problem for Markov Decision Processes (MDPs). Our work subsumes previously studied special cases of metareasoning and shows that in the general case, metareasoning is at most polynomially harder than solving MDPs with any given algorithm that disregards the cost of thinking. For reasons we discuss, optimal general metareasoning turns out to be impractical, motivating approximations. We present approximate metareasoning procedures which rely on special properties of the BRTDP planning algorithm and explore the effectiveness of our methods on a variety of problems.

\end{abstract}








\section{Introduction}
\comment{
In online planning, an agent must repeatedly decide whether to invest in additional planning to improve its policy or to continue to execute a previously computed plan. We formalize and study this challenge of metareasoning about planning.  Metareasoning about base-level problem solving has been explored for guiding such challenges as probabilistic inference and decision making \cite{horvitz-ijcai89}, theorem proving \cite{horvitz-uai95,kautz-aaai02}, handling streams of problems \cite{horvitz-aij01,shahaf-ijcai09}, and search \cite{russell-aij91}. 
However, there has been no work on metareasoning for the general planning problem.  
}
\comment{We formalize metareasoning about plans and show its complexity.  Then, we explore the use of approximations of the expected value of policies that can be computed with additional planning. The greedy approximation consider bounds estimated by the RTDP algorithm. We show the behavior of the metareasoning procedure on experiments in several synthetic domains.}

Offline probabilistic planning approaches, such as policy iteration \cite{howard-60}, aim to construct a policy for every possible state before acting. In contrast, online planners, such as RTDP \cite{barto-aij95} and UCT \cite{kocsis-ecml06}, interleave planning with execution. After an agent takes an action and moves to a new state, these planners suspend execution to plan for the next step.\comment{This process leads to a more context-sensitive exploration of the policy space but requires computation for planning in addition to execution.} The more planning time they have, the better their action choices. Unfortunately, planning time in online settings is usually not free.

\comment{
introducing a metareasoning decision problem: How long should an agent plan every time it needs to make a move? This question has been sidestepped in work on online planning algorithms by assuming that taking a break from acting in order to plan is free. In real-world applications, this delay typically introduces costs.

Traditional algorithms for planning under uncertainty may not be applicable for realistic problems with large or infinite state spaces. Online algorithms such as , have been proposed to overcome this complexity and have shown to be successful in tackling much larger problems than previously possible.
Such innovation is a great boon as many real-world domains are in this class of intractable problems. However, any agent utilizing online planning must make a tradeoff between continuing to plan and executing its suboptimal plan. 
Just as a traveler cannot spend her entire day planning her vacation in her hotel room, lest she risk having no vacation at all, an agent must \emph{metareason} about the value of computation.
}

Consider an autonomous Mars rover trying to decide what to do while a sandstorm is nearing. The size and uncertainty of the domain precludes a-priori computation of a complete policy, and demands the use of online planning algorithms. Normally, the longer the rover runs its planning algorithm, the better decision it can make. However, computation costs power; moreover, if it reasons for too long without taking preventive action, it risks being damaged by the oncoming sandstorm. Or consider a space probe on final approach to a speeding comet, when the probe must plan to ensure a safe landing based on new information it gets about the comet's surface. More deliberation time means a safer landing. At the same time, if the probe deliberates for too long, the comet may zoom out of range --- a similarly undesirable outcome.


\comment{
Scenarios like these give rise to a \emph{metareasoning decision problem}: how should an agent trade off the cost of planning  and the quality of the resulting policy for the base planning task every time it needs to make a move, so as to optimize its long-term utility? By assuming the cost of computation time to be zero, the literature on online planning and reasoning, with rare exceptions \cite{horvitz-ijcai89,russell-aij91}, has side-stepped this question. }

Scenarios like these give rise to a general \emph{metareasoning decision problem}: how should an agent trade off the cost of planning
and the quality of the resulting policy for the base planning
task every time it needs to make a move, so as to optimize
its long-term utility? Metareasoning about base-level problem solving has been explored for probabilistic inference and decision making \cite{horvitz-uai87,horvitz-ijcai89}, theorem proving \cite{horvitz-uai95,kautz-aaai02}, handling streams of problems \cite{horvitz-aij01,shahaf-ijcai09}, and search \cite{russell-aij91,burns-jair13}. There has been little work exploring generalized approaches to metareasoning for planning.

We explore the general metareasoning problem for Markov decision processes (MDPs). We begin by formalizing the problem with a general but precise definition that subsumes several previously considered metareasoning models. Then, we show with a rigorous theoretical analysis that optimal general metareasoning for planning under uncertainty is at most polynomially harder than solving the original planning problem with any given MDP solver. However, this increase in computational complexity, among other reasons we discuss, renders such optimal general metareasoning impractical. The analysis raises the issue of allocating time for \emph{metareasoning itself}, and leads to an infinite regress of meta$^*$reasoning (metareasoning, metametareasoning, etc.) problems. 

We next turn to the development and testing of fast approximate metareasoning algorithms. Our procedures use the Bounded RTDP (BRTDP \cite{mcmahan-icml05}) algorithm to tackle the base MDP problem, and leverage BRTDP-computed bounds on the quality of MDP policies to reason about the value of computation. In contrast to prior work on this topic, our methods do not require any training data, precomputation, or prior information about target domains. We perform a set of experiments showing the performance of these algorithms versus baselines in several synthetic domains with different properties, and characterize their performance with a measure that we call the \emph{metareasoning gap} --- a measure of the potential for improvement from metareasoning. The experiments demonstrate that the proposed techniques excel when the metareasoning gap is large.

\section{Related Work}
\comment{
Previous work includes investigation of metareasoning for time-critical single-shot decisions \cite{horvitz-uai87,horvitz-ijcai89}, where expected value of computation is used to guide probabilistic inference. Other efforts have explored changes in the computation of single actions in search rather than plans \cite{russell-aij91,burns-jair13}.\comment{It includes calculating the value of computation in which a real-time medical assistant must find the optimal point at which to stop diagnostic inference and begin treatment of a patient.}  Beyond real-time metareasoning, several lines of work have leveraged offline learning and optimization \cite{horvitz-tr90,kautz-aaai02,russell-aij91,russell-ijcai89,horvitz-uai01}. In other work, \cite{hansen-aij01} proposed a non-myopic dynamic programming solution for single-shot problems. }

Metareasoning efforts to date have employed strategies that avoid the complexity of the general metareasoning problem for planning via relying on different kinds of simplifications and approximations. 
Such prior studies include metareasoning for time-critical decisions where expected value of computation is used to guide probabilistic inference \cite{horvitz-uai87,horvitz-ijcai89}, and work on the guiding of sequences of single actions in search \cite{russell-aij91,burns-jair13}. Several lines of work have leveraged offline learning \cite{breese-uai90,horvitz-uai01,kautz-aaai02}. Other studies have relied on optimizations and inferences that leverage the structure of problems, such as the functional relationships between metareasoning and reasoning \cite{horvitz-tr90,zilberstein-aij96}, the structure of the problem space \cite{horvitz-uai95}, and the structure of utility \cite{horvitz-aij01}.  In other work, \cite{hansen-aij01} proposed a non-myopic dynamic programming solution for single-shot problems. Finally, several planners rely on a heuristic form of online metareasoning when maximizing policy reward under computational constraints in real-world time with no ``conversion rate'' between the two \cite{kolobov-aaai12,keller-aaai15}. In contrast, our metareasoning model is unconstrained, with computational and base-MDP costs in the same ``currency.''

\comment{
Planners at the International Probabilistic Planning Competition (IPPC) \cite{IPPC-2011} also rely on a form of metareasoning, albeit only implicitly and heuristically: they try to compute a policy with high reward under computational constraints in real-world time, with no ``conversion rate'' between the two. In contrast, our metareasoning model is unconstrained, with computational and base-MDP costs in the same ``currency.'' 
}

\comment{
Many planners designed for the International Probabilistic Planning Competition (e.g., \cite{kolobov-icaps12,kolobov-aaai12} optimize base-MDP costs under computational constraints specified in 
real-world-time, with no “conversion rate” between the two. In contrast, our model is unconstrained, with computational and base-MDP costs in the same “currency.” 
}
Our investigation also has connections to research on allocating time in a system composed of multiple sensing and planning components \cite{zilberstein-aij96,zilberstein-ijcai93}, on optimizing portfolios of planning strategies in scheduling applications \cite{dean-aij95}, and on choosing actions to explore in Monte Carlo planning \cite{hay-uai12}. In other related work, \cite{chanel-icaps14} consider how best to plan on one thread, while a separate thread processes execution. 

\section{Preliminaries}
A key contribution of our work is formalizing the metareasoning problem for planning under uncertainty. We build on the framework of stochastic shortest path (SSP) MDPs with a known start state. This general MDP class includes finite-horizon and discounted-reward MDPs as special cases \cite{bertsekas-book}, and can also be used to approximate partially observable MDPs with a fixed initial belief state. 
An SSP MDP $M$ is a tuple $\langle S, A, T, C,s_0, s_g \rangle$, where $S$ is a finite set of states, $A$ is a set of actions 
that the agent can take,
$T: (S, A, S) \rightarrow [0,1]$ is a transition function, $C: (S, A) \rightarrow \mathbb{R}$ is a cost function, $s_0 \in S$ is the start state, and $s_g$ is the goal state. An SSP MDP must have a complete proper policy, a policy that leads to the goal from any state with probability 1, and all improper policies must accumulate infinite cost from every state from which they fail to reach the goal with a positive probability. The objective is to find a Markovian policy $\pi_{s_0}:S \rightarrow A$ with the minimum expected cost of reaching the goal from the start state $s_0$ --- in SSP MDPs, at least one policy of this form is globally optimal.

Without loss of generality, we assume an SSP MDP to have a specially designated \texttt{NOP} (``no-operation'') action. \texttt{NOP} is an action the agent chooses when it wants to ``idle'' and ``think/plan'', and its semantic meaning is problem-dependent. For example, in some MDPs, choosing \texttt{NOP} means staying in the current state for one time step, while in others it may mean allowing a tidal wave to carry the agent to another state. Designating an action as \texttt{NOP} does not change SSP MDPs' mathematical properties, but plays a crucial role in our metareasoning formalization.




\section{Formalization and Theoretical Analysis of Metareasoning for MDPs}\label{sec:theory}

The online planning problem of an agent, which involves choosing an action to execute in any given state, is represented as an SSP MDP that encapsulates the dynamics of the environment and costs of acting and thinking. We call this problem the \emph{base problem}. The agent starts off in this environment with some default policy, which can be as simple as random or guided by an unsophisticated heuristic. The agent's metareasoning problem, then, amounts to deciding, at every step during its interaction with the environment, between improving its existing policy or using this policy's recommended action while paying a cost for executing \emph{either of these options}, so as to minimize its expected cost of getting to the goal. 

\comment{
Existing theory of sequential decision making aims at optimizing the utility of the course of action an agent should pursue in the scenario at hand, and assumes that the agent does not incur a cost for \emph{computing} this course of action. The metareasoning problem lifts this assumption by allowing an agent to start off with an arbitrary policy and choosing at every step between spending several computation cycles on improving it or executing an action recommended by the existing policy, \emph{while paying a cost for either of these options}. The question is, then: how should an agent decide between the two so as to maximize its long-term utility?
}

Besides the agent's state in the base MDP, which we call the \emph{world state}, the agent's metareasoning decisions are conditioned on the algorithm the agent uses for solving the base problem, i.e., intuitively, on the agent's thinking process. To abstract away the specifics of this planning algorithm for the purposes of metareasoning formalization, we view it as a black-box MDP solver and represent it, following the Church-Turing thesis, with a Turing machine $B$ that takes a base SSP MDP $M$ as input. 
\comment{
Besides the dynamics of the environment and relative costs of acting and thinking, all of which can be captured by an SSP MDP that we call the \emph{base problem}, the answer to this question depends on the properties of an agent's process of thinking about the base problem. To abstract away the details of this planning procedure, we view the algorithm used by the agent as a black-box MDP solver and represent it, following the Church-Turing thesis, with a Turing machine $B$ that takes a base SSP MDP $M$ as input.  
}
In our analysis, we assume the following about Turing machine $B$'s operation:

\begin{itemize}
\item{} $B$ is deterministic and halts on every valid base MDP $M$. This assumption does not affect the expressiveness of our model, since randomized Turing machines can be trivially simulated on deterministic ones, e.g., via seed enumeration (although potentially at an exponential increase in time complexity). At the same time, it greatly simplifies our theorems.

\item{} An agent's thinking cycle corresponds to $B$ executing a single instruction.

\item{} A \emph{configuration} of $B$ is a combination of $B$'s tape contents, state register contents, head position, and next input symbol. It represents the state of the online planner in solving the base problem $M$. We denote the set of all configurations $B$ ever enters on a given input MDP $M$ as $X^{B(M)}$. We assume that $B$ can be paused after executing $y$ instructions, and that its configuration at that point can be mapped to an action for any world state $s$ of $M$ using a special function $f: S \times X^{B(M)} \rightarrow A$ in time polynomial in $M$'s flat representation. The number of instructions needed to compute $f$ is not counted into $y$. That is, an agent can stop thinking at any point and obtain a policy for its current world state.


\item{}An agent is allowed to ``think'' (i.e., execute $B$'s instructions) only by choosing the \texttt{NOP} action. If an agent decides to resume thinking after pausing $B$ and executing a few actions, $B$ re-starts from the configuration in which it was last paused. 
\end{itemize}

\noindent
We can now define metareasoning precisely:

\begin{definition}\label{def:meta}Metareasoning Problem. Consider an MDP $M = \langle S, A, T, C,s_0, s_g \rangle$ and an SSP MDP solver represented by a deterministic Turing machine $B$. Let $X^{B(M)}$ be the set of all configurations $B$ enters on input $M$, and let $T^{B(M)}:X^{B(M)} \times X^{B(M)} \rightarrow \{0,1\}$ be the (deterministic) transition function of $B$ on $X^{B(M)}$. A metareasoning problem for $M$ with respect to $B$, denoted $\mathtt{Meta}_{B}(M)$ is an MDP $\langle S^m, A^m, T^m, C^m, s^m_0, s^m_g \rangle$ s.t.

\begin{itemize}

\item{} $S^m = S \times X^{B(M)}$
\item{} $A^m = A$
\item{} $T^m((s, \chi), a, (s', \chi'))$
\begin{align*}
    =  \begin{cases}
    T(s, a, s') \quad \text{if $a \neq $ \texttt{NOP}, $\chi = \chi'$, and $a =f(s, \chi)$} \\
    T(s, a, s')\cdot T^{B(M)}(\chi, \chi') \quad \text{if $a = $ \texttt{NOP}} \\
    \text{0 otherwise}
    \end{cases}
    \end{align*}

\item{}$C^m((s, \chi), a, (s', \chi')) = C(s, a, s')$ if $T(s, a, s') \neq 0$, and 0 otherwise
\item{}$s^m_0 = (s_0, \chi_0)$, where $\chi_0$ is the first configuration $B$ enters on input $M$
\item{}$s^m_g = (s_g, \chi)$, where $\chi$ is any configuration in $X^{B(M)}$
\end{itemize}

\noindent
Solving the metareasoning problem means finding a policy for $\mathtt{Meta}_{B}(M)$ with the lowest expected cost of reaching $s^m_g$.\\

\end{definition}

This definition casts a metareasoning problem for a base MDP as another MDP (a \emph{meta-MDP}). 
Note that in $\texttt{Meta}_{B}(M)$, an agent \emph{must} choose either \texttt{NOP} or an action currently recommended by $B(M)$; in other cases, the transition probability is 0. Thus, $\texttt{Meta}_{B}(M)$'s definition essentially forces an agent to switch between two ``meta-actions'': thinking or acting in accordance with the current policy. 


Modeling an agent's reasoning process with a Turing machine allows us to see that at every time step the metareasoning decision depends on the combination of the current world state and the agent's ``state of mind,'' as captured by the Turing machine's current configuration. In principle, this decision could depend on the entire history of the two, but the following theorem implies that, as for $M$, at least one optimal policy for $\texttt{Meta}_{B}(M)$ is always Markovian. 


\comment{
\begin{theorem}\label{thm:inher} If the base MDP $M$ is an SSP MDP, then $\texttt{Meta}_{B}(M)$ is an SSP MDP as well, provided that $B$ on $M$ eventually enters an absorbing region consisting of configurations that represent proper policies for $M$ and remains there until it halts. If the base MDP $M$ is an infinite-horizon discounted-reward MDP, then so is $\texttt{Meta}_{B}(M)$. If the base MDP $M$ is a finite-horizon MDP, then so is $\texttt{Meta}_{B}(M)$. 
\end{theorem}
}

\begin{theorem}\label{thm:inher} If the base MDP $M$ is an SSP MDP, then $\texttt{Meta}_{B}(M)$ is an SSP MDP as well, provided that $B$ halts on $M$ with a proper policy. If the base MDP $M$ is an infinite-horizon discounted-reward MDP, then so is $\texttt{Meta}_{B}(M)$. If the base MDP $M$ is a finite-horizon MDP, then so is $\texttt{Meta}_{B}(M)$. 
\end{theorem}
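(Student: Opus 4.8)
The plan is to treat the three MDP classes separately, with the SSP case carrying essentially all of the work. Throughout I will exploit the fact (noted after the definition) that at every meta-state $(s,\chi)$ only two actions have transitions that sum to one, namely \texttt{NOP} and the currently recommended action $f(s,\chi)$, so that every meta-policy amounts to a state-wise choice between ``think'' and ``act as currently advised.'' I will also use the structure of $X^{B(M)}$: since $B$ is deterministic, on input $M$ it traces a single chain of configurations $\chi_0 \to \chi_1 \to \cdots$; the configuration coordinate of a meta-state is non-decreasing along this chain, since it advances only under \texttt{NOP}; and when $B$ halts with a proper policy the chain is finite, terminating in an absorbing halting configuration $\chi_h$.

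For the SSP case I must verify the two defining SSP properties of $\texttt{Meta}_{B}(M)$. First I would exhibit a complete proper policy: choose \texttt{NOP} until $B$ reaches $\chi_h$, then follow the proper base policy $\pi_B$ that $B$ outputs. Reaching $\chi_h$ takes a fixed finite number of \texttt{NOP} steps (one advance per step, and $B$ halts), each of bounded cost, so the thinking phase has finite expected cost and cannot trap the world state, because $\pi_B$ is proper and hence reaches $s_g$ from every world state with probability $1$; together these give a policy reaching some goal meta-state $(s_g,\cdot)$ with probability $1$ and finite expected cost from every meta-state. This step also records that $S^m = S \times X^{B(M)}$ is finite, as required of an SSP MDP, since $B$ halting makes $X^{B(M)}$ finite.

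Second, and this is the crux, I would show that every improper meta-policy $\pi^m$ accumulates infinite cost from every meta-state from which it fails to reach the goal with positive probability. The key observation is that the configuration coordinate, being non-decreasing along a finite chain, can advance only finitely often, so on \emph{every} trajectory it eventually stabilizes at some final value $\chi^*$. Once stabilized, $\pi^m$ restricted to configuration $\chi^*$ is a \emph{stationary} base policy $\rho$ (choosing at each world state $s'$ either \texttt{NOP} or $f(s',\chi^*)$, both of which keep the configuration at $\chi^*$), and from the stabilization time onward the world-state coordinate evolves exactly as $M$ under $\rho$ with identical costs. If a trajectory from $(s,\chi)$ never reaches the goal, then after stabilization $\rho$ fails to reach $s_g$ from the current world state, so $\rho$ is improper in $M$, and by $M$'s SSP property its cost-to-go from that world state is infinite. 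Integrating over the positive-probability event that the goal is never reached then yields infinite expected cost for $\pi^m$ from $(s,\chi)$. I expect this reduction to be the main obstacle: one must argue carefully that the configuration always stabilizes, identify the induced stationary base policy, and import $M$'s infinite-cost guarantee, checking that the cost incurred before stabilization does not interfere with the $+\infty$ conclusion.

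For the infinite-horizon discounted and finite-horizon cases the argument is lighter, since neither class imposes a properness condition. In both, the meta-MDP inherits stationary transitions and bounded costs from $M$ (the costs are literally copied), and the construction adds no new decision epochs, as each meta-step is a single base-step. Thus for a discounted $M$ I would equip $\texttt{Meta}_{B}(M)$ with the same discount factor $\gamma \in (0,1)$ and observe that every meta-policy then has a well-defined finite expected discounted cost, so $\texttt{Meta}_{B}(M)$ is again infinite-horizon discounted; for a finite-horizon $M$ with horizon $H$, I would note that \texttt{NOP} consumes a time step like any other action, so $\texttt{Meta}_{B}(M)$ is finite-horizon with the same horizon $H$. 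In both cases one need only confirm that the effective action restriction to $\{\texttt{NOP}, f(s,\chi)\}$ keeps transition probabilities summing to one, which holds directly by the definition of $T^m$.
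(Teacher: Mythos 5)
Your handling of the discounted and finite-horizon cases, and your construction of a complete proper policy (choose \texttt{NOP} until $B$ halts, then follow the proper policy corresponding to $B$'s final configuration from whatever world state the thinking phase has led to), match the paper's proof. The divergence is in the improper-policy condition, and there your argument has a genuine flaw. You claim that after stabilization at $\chi^*$ the policy chooses ``either \texttt{NOP} or $f(s',\chi^*)$, both of which keep the configuration at $\chi^*$.'' By the definition of $T^m$, choosing \texttt{NOP} is precisely what advances $B$'s configuration: $T^m((s,\chi),\texttt{NOP},(s',\chi')) = T(s,\texttt{NOP},s')\cdot T^{B(M)}(\chi,\chi')$, and $T^{B(M)}$ deterministically moves $\chi$ to its successor. \texttt{NOP} is configuration-preserving only at a halting configuration. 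So stabilization at a non-halting $\chi^*$ forces the opposite conclusion: on that event, $\pi^m$ (almost surely) never chooses \texttt{NOP} again, and the induced base policy is $\rho = f(\cdot,\chi^*)$ on the visited states only.

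This matters because your cost-transfer step then breaks. Let $N$ be the set of world states at which $\pi^m$ picks \texttt{NOP} in configuration $\chi^*$. The meta-process after stabilization coincides with $M$ under $\rho$ only on trajectories that avoid $N$; at states of $N$ the two processes part ways (the meta-process advances the configuration, while $\rho$ must execute some base action). You do correctly get that $\rho$ is improper, since it fails with positive probability on the $N$-avoiding trajectories, and hence $E_{M,\rho}[\mathrm{cost}]=\infty$ by $M$'s SSP property. But that infinite expectation may be contributed entirely by trajectories passing through $N$, which the meta-process never realizes, so it does not follow that $\pi^m$ itself accumulates infinite expected cost; the ``integrating over the positive-probability event'' step is exactly where this is hidden. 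Closing the gap requires either a backward induction along the finite configuration chain or, more simply, the paper's route: since $B$ is deterministic, the configuration is a deterministic function of the action history, and $M$ itself contains \texttt{NOP} with matching marginal transitions ($\sum_{\chi'} T(s,\texttt{NOP},s')\, T^{B(M)}(\chi,\chi') = T(s,\texttt{NOP},s')$), so any meta-policy and each of its trajectories project directly onto $M$ with an identical world-state process, identical costs, and identical failure probability; $M$'s improper-policy condition then applies in one step, with no stabilization argument and no conditioning at all.
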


\begin{proof} 
Verifying the result for finite-horizon and infinite-horizon discounted-reward MDPs $M$ is trivial, since the only requirement $\texttt{Meta}_{B}(M)$ must satisfy in these cases is to have a finite horizon or a discount factor, respectively.

If $M$ is an SSP MDP, then, per the SSP MDP definition \cite{bertsekas-book}, to ascertain the theorem's claim we need to verify that (1) $\texttt{Meta}_{B}(M)$ has at least one proper policy and (2) every improper policy in $\texttt{Meta}_{B}(M)$ accumulates an infinite cost from some state.

To see why (1) is true, recall that $\texttt{Meta}_{B}(M)$'s state space is formed by all configurations Turing machine $B$ enters on $M$. 
Consider any state  $(s'_0,\chi'_0)$ of $\texttt{Meta}_{B}(M)$. Since $B$ is deterministic, as stated in Section 3, the configuration $\chi'_0$ lies in the linear sequence of configurations between the ``designated'' initial configuration $\chi_0$ and the final proper-policy configuration that B enters according to the theorem. Thus, $B$ can reach a proper-policy configuration from $\chi'_0$. Therefore, let the agent starting in the state $(s'_0,\chi'_0)$ of $\texttt{Meta}_{B}(M)$ choose \texttt{NOP} until $B$ halts, and then follow the proper policy corresponding to $B$'s final configuration until it reaches a goal state $s_g$ of $M$. This state corresponds to a goal state $(s_g, \chi)$ of $\texttt{Meta}_{B}(M)$. Since this construct works for any $(s'_0,\chi'_0)$, it gives a complete proper policy for $\texttt{Meta}_{B}(M)$.

\comment{
By the theorem's statement, at least one such configuration $\chi$ must exist, so the above gives a proper policy for $\texttt{Meta}_{B}(M)$, one that reaches the goal almost surely from $(s_0,\chi_0)$. 
}

To verify (2), consider any policy $\pi^m$ for $\texttt{Meta}_{B}(M)$ that with a positive probability fails to reach the goal. Any infinite trajectory of $\pi^m$ that fails to reach the goal can be mapped onto a trajectory in $M$ that repeats the action choices of $\pi^m$'s trajectory in $M$'s state space $S$. Since $M$ is an SSP MDP, this projected trajectory must accumulate an infinite cost, and therefore the original trajectory in $\texttt{Meta}_{B}(M)$ must do so as well, implying the desired result.\\
\end{proof}

We now present two results to address the difficulty of metareasoning. 

\begin{theorem}
\label{thm:poly} For an SSP MDP $M$ and a deterministic Turing machine $B$ representing a solver for $M$, the time complexity of $\texttt{Meta}_{B}(M)$ is at most polynomial in the time complexity of executing $B$ on $M$.

\comment{\label{thm:poly} For an SSP MDP $M$ and a deterministic Turing machine $B$ representing a solver for $M$, the time complexity of $\texttt{Meta}_{B}(M)$ is at most polynomial in the time complexity of $B$, by running $B$ on $M$ to construct $\texttt{Meta}_{B}(M)$.}
\end{theorem}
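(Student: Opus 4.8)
The plan is to show that the \emph{flat} description of $\texttt{Meta}_{B}(M)$ can be written down in time polynomial in the running time of $B$ on $M$, and that this flat MDP can then be solved in time polynomial in its own size; composing these two polynomial bounds gives the result. Write $\tau$ for the number of instructions $B$ executes on $M$. The first and central observation is that $|X^{B(M)}|$ is at most $\tau+1$: because $B$ is deterministic and halts (our standing assumptions on $B$), the configurations it enters form a sequence $\chi_0, \chi_1, \dots, \chi_\tau$ with no repetitions---a repeated configuration would force $B$ into an infinite loop---so there are at most $\tau+1$ of them. Hence the meta-level state space $S^m = S \times X^{B(M)}$ has size $|S|\cdot(\tau+1)$. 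Since a solver must at least read its input, $|M|\le \tau$, and in particular $|S|,|A|\le\tau$, so $|S^m|$ and $|A^m|$ are polynomial in $\tau$.

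Next I would construct the flat meta-MDP explicitly. Running $B$ on $M$ once, at cost $\tau$, and recording each configuration as it is entered produces the full list $\chi_0,\dots,\chi_\tau$; each configuration occupies at most $O(\tau)$ tape cells (the head moves at most $\tau$ times from an input of size $|M|\le\tau$), so this list has total size polynomial in $\tau$. Using it, every entry of $T^m$ and $C^m$ can be filled in directly from Definition~\ref{def:meta}: the factor $T(s,a,s')$ and the cost are table lookups in $M$; the factor $T^{B(M)}(\chi,\chi')$ is $1$ exactly when $\chi'$ immediately follows $\chi$ in the recorded sequence and $0$ otherwise; and the test $a=f(s,\chi)$ is decided by one evaluation of $f$, which by assumption runs in time polynomial in $M$'s flat representation and hence polynomial in $\tau$. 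As the number of $(\text{state},\text{action},\text{state}')$ triples is polynomial in $\tau$ and each entry costs polynomial-in-$\tau$ time, the whole of $\texttt{Meta}_{B}(M)$ is assembled in time polynomial in $\tau$.

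Finally, I would invoke Theorem~\ref{thm:inher}: since $B$ is a solver it halts on $M$ with a proper policy, so $\texttt{Meta}_{B}(M)$ is itself an SSP MDP. SSP MDPs given in flat form are solvable in time polynomial in their representation size (for instance via the standard linear-programming formulation), and we have just argued that this representation size is polynomial in $\tau$. Composing the polynomial construction time with the polynomial solve time yields a total running time polynomial in $\tau$, the time complexity of executing $B$ on $M$, as claimed.

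The step I expect to carry the real weight is the first one---pinning down that the meta-MDP's state space is governed by the \emph{number of configurations $B$ actually visits} rather than the astronomically larger set of all conceivable Turing-machine configurations. Everything downstream (bounded configuration size, polynomial transition table, polynomial SSP solve) is routine once that bound is in hand, but it depends essentially on the determinism-plus-halting assumptions on $B$, and it is worth stating explicitly why those assumptions rule out configuration repetition.
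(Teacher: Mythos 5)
Your proposal is correct and takes essentially the same route as the paper's own proof: simulate $B$ on $M$ to record the set of configurations, assemble the flat $\texttt{Meta}_{B}(M)$ from Definition~\ref{def:meta} in time polynomial in that record, and then invoke Theorem~\ref{thm:inher} to conclude it is an SSP MDP solvable in polynomial time by linear programming. Your explicit bounds (no repeated configurations under determinism-plus-halting, configuration size $O(\tau)$, and $|M|\le\tau$) simply spell out details the paper's shorter proof leaves implicit.
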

\begin{proof}
The main idea is to construct the MDP representing $\texttt{Meta}_{B}(M)$ by simulating $B$ on $M$. Namely, we can run $B$ on $M$ until it halts and record every configuration $B$ enters to obtain the set $X$. Given $X$, we can construct $S^m = S \times X$ and all other components of $\texttt{Meta}_{B}(M)$ in time polynomial in $|X|$ and $|M|$. Constructing $X$ itself takes time proportional to running time of $B$ on $M$. Since, by Theorem \ref{thm:inher}, $\texttt{Meta}_{B}(M)$ is an SSP MDP and hence can be solved in time polynomial in the size of its components, e.g., by linear programming, the result follows. \\
\end{proof}

\begin{theorem}\label{thm:pcomp}
Metareasoning for SSP MDPs is $P$-complete under $NC$-reduction. (Please see the appendix for proof.)
\end{theorem}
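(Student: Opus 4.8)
The plan is to prove the two directions of $P$-completeness separately. Membership in $P$ is essentially immediate from Theorem~\ref{thm:poly}: under the natural input convention that the solver $B$ comes with a polynomial time bound (e.g.\ a polynomial clock, or a unary budget $1^t$), running $B$ on $M$, recording $X^{B(M)}$, building $\texttt{Meta}_B(M)$, and solving it by linear programming all take time polynomial in the input. Since $\texttt{Meta}_B(M)$ is an SSP MDP by Theorem~\ref{thm:inher}, we can decide in polynomial time whether its optimal expected cost is below any given threshold $\theta$, placing the decision version of metareasoning in $P$.

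For hardness I would give an $NC$-reduction from the Circuit Value Problem (CVP), the standard $P$-complete problem under $NC$- (indeed logspace-) reductions; as $\leq_{NC}$ is closed under composition, one such reduction establishes $P$-hardness for all of $P$. Given a circuit--input pair $(\mathcal{C},x)$, I would paste its flat encoding into the description of a tiny base SSP MDP $M$ and let $B$ be a \emph{fixed} circuit-evaluator Turing machine that reads $(\mathcal{C},x)$ from that description and computes $\mathcal{C}(x)$. The MDP has a single start state $s_0$ at which \texttt{NOP} deterministically keeps the world at $s_0$ while advancing $B$'s evaluation, together with two committing actions that both reach the goal properly, one \emph{cheap} and one \emph{expensive}. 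The decoding function $f$ recommends the expensive action while $B$ is still running and, once $B$ halts, recommends the cheap action if $\mathcal{C}(x)=1$ and the expensive action otherwise.

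The calibration of costs makes the reduction faithful. Assigning cost $0$ to the cheap action, $2$ to the expensive action, $1/(t+1)$ to each \texttt{NOP} step (where $t$ is $B$'s polynomial running-time bound), and setting $\theta = 1$, the only world trajectory is ``think for some $k \leq t$ steps, then commit.'' When $\mathcal{C}(x)=1$ it is optimal to think until $B$ halts and then take the cheap action, for total cost at most $t/(t+1) < 1$; when $\mathcal{C}(x)=0$ every committing action ever recommended is the expensive one, so it is optimal to commit immediately for cost $2 > 1$. Hence the optimal metareasoning cost drops below $\theta$ exactly when $\mathcal{C}(x)=1$, and because both committing actions are proper, Theorem~\ref{thm:inher} certifies that $\texttt{Meta}_B(M)$ is a bona fide SSP MDP, so the instance is well-posed.

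The last step is to check that the reduction map $(\mathcal{C},x)\mapsto(M,B,\theta)$ is itself in $NC$: $M$ is a constant-size gadget with $(\mathcal{C},x)$ copied into it, $B$ is a single machine independent of the input, and $\theta$ together with the rational cost $1/(t+1)$ depend on the input only through the easily written bound $t$, so emitting the instance is a local, massively parallel operation. I expect this to be the main obstacle --- not for its arithmetic content but conceptually: one must ensure that the reducer never itself simulates $B$ or evaluates $\mathcal{C}$, since all of that work must remain encoded \emph{inside} the target meta-MDP (in its configuration component) and be discharged by the metareasoning solver. Otherwise the reduction would silently collapse into a mere polynomial-time reduction and forfeit the finer $NC$ classification that the theorem asserts.
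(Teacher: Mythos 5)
Your proposal is correct in its essentials, but it takes a genuinely different route from the paper's proof. For hardness, the paper gives an $NC$-reduction from SSP MDPs themselves (which are $P$-complete): given a hard MDP $M$, it builds a ``lollypop'' MDP $M'$ by prepending a chain of $p_{LP}(|M|)$ zero-cost \texttt{NOP} states leading into $M$'s start state (and adding positive-cost self-loop \texttt{NOP}s to $M$'s original states), and takes $B$ to be a fixed \emph{optimal} LP-based solver that, via an input-format convention, solves only the original $M$ and ignores the chain; the agent thinks for free down the chain and reaches $s_0$ holding an optimal policy, so the optimal cost of $\texttt{Meta}_B(M')$ \emph{equals} that of $M$. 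You instead reduce from CVP with a constant-size base gadget, let the fixed solver $B$ be a circuit evaluator, and calibrate costs ($0$ vs.\ $2$, \texttt{NOP} at $1/(t+1)$, threshold $1$) so that the optimal meta-cost crosses the threshold exactly when $\mathcal{C}(x)=1$. Both arguments satisfy the paper's formalization of the claim (there exists a fixed $B$ for which deciding $\texttt{Meta}_B(\cdot)$ is in $P$ and is $P$-hard), and both lean on an encoding convention --- you ``paste'' $(\mathcal{C},x)$ into $M$'s description, while the paper assumes a ``\#\#\#''-delimited input format --- so your hand-wave there is no worse than the paper's. The trade-off: your reduction is simpler and its cost calibration is transparent, but your $B$ is not in any meaningful sense solving the base MDP; all the hardness is smuggled into predicting an arbitrary embedded polynomial-time computation, and $B$ may halt recommending a suboptimal (merely proper) action. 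The paper's construction makes the stronger, more on-message point that metareasoning remains $P$-hard even when $B$ is an optimal polynomial-time MDP solver --- the meta-problem inherits the hardness of the base planning problem itself --- and it preserves optimal costs exactly rather than through a threshold gap. Two loose ends you should tie up: specify $B$'s behavior on inputs that do not parse as your gadget (e.g., fall back to plain LP solving), so that membership in $P$ and the proper-policy hypothesis of Theorem~\ref{thm:inher} hold for \emph{every} input MDP and not just reduction images; and note that membership for your fixed circuit-evaluator $B$ follows directly from Theorem~\ref{thm:poly} with no ``time bound as part of the input'' convention, since that $B$ is polynomial-time by construction.
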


\comment{
starts by reading the input string up to \emph{one} character past ``\#\#\#''. If the last character is not blank, it knows that the input is of the type in Figure \ref{f:constr}. In this case, it calls an LP solver on 
}

\comment{
For the second part of the proof, we perform a reduction from the class of infinite-horizon discounted-reward MDPs with strictly positive rewards, which we denote as \emph{DISC$^+$}, to the class of metareasoning problems with respect to a fixed polynomial optimal linear programming solver $B$. \emph{DISC$^+$} is $P$-complete: this follows from the fact that the class of infinite-horizon discounted-reward MDPs with \emph{arbitrary} rewards (\emph{DISC}) is $P$-complete \cite{papa-mos87}, and that any \emph{DISC} MDP can be easily converted to a \emph{DISC$^+$} MDP in a way that preserves ordering of policies by their utility. Specifically the method works by subtracting, in the original \emph{DISC} MDP, this MDP's minimum transition reward plus an arbitrary $\epsilon > 0$ from the rewards of all of its transitions. Like \emph{DISC} MDPs, \emph{DISC$^+$} MDPs are a special case of SSP MDPs \cite{bertsekas-book}. 

Given a \emph{DISC$^+$} MDP $M'$ with an initial state, we convert it into a metareasoning problem by building a new MDP $M$ as follows. First, \comment{recall that SSP MDPs can be solved by $B$ in polytime, i.e., there exists a polynomial \emph(poly(n)) that upper-bounds $B$'s running time on any input SSP MDP.}we augment $M'$ with an additional action, \texttt{NOP}, that has a reward of 0 in any state and leads back to that state with probability 1. This can be achieved by $|S|$ parallel computers, each writing the transition function and cost function values for the \texttt{NOP} action for some state $s$. Each such computer just needs to write down two values (``1" for the transition probability, ``0" for the transition cost), and hence works in constant (i.e., polylogarithmic) time.

Next, we convert the \texttt{NOP}-augmented version of $M'$ into an SSP MDP $M$ almost in the same way as \emph{DISC} MDPs get turned into SSP MDPs  \cite{bertsekas-book}. Namely,  we introduce a new state $s_g$, a goal, and modify the transition function of $M'$ so that from every other state $s$ and action $a$, $a$ leads to $s_g$ from $s$ with probability $(1-\gamma)$, where $\gamma$ is the discount factor in $M$, and multiply the probabilities of all existing transitions by $\gamma$. The reward/cost of the new transition to $s_g$ is set to 0. There is only one state-action pair to which do \emph{not} apply the above modification --- the \texttt{NOP} action in state $s_0$. To accomplish all these changes, we need $O(|S|^2|A|)$ parallel Turing machines, each modifying the probability (and, in the case of $s_g$, reward) of the transition between some states $s$ and $s'$ via action $a$. Thus, entire trnasformation of $M'$ into $M$ can be carried out by a polynomial number of machines, each doing its work in polylogarithmic (in fact, constant) time, and thereby conforms to the definition of $NC$-reduction.

Now, consider the constructed MDP $M$ and the metareasoning problem $\texttt{Meta}_{B}(M)$ for it. ``Thinking'' in it (i.e., executing \texttt{NOP}) doesn't cost anything. However, in all states but $s_0$, all non-\texttt{NOP} actions bring a positive reward, while executing \texttt{NOP} brings no reward and, in addition, may lead the agent to the goal $s_g$ where reward accumulation stops forever. All this ensures that the optimal policy in $\texttt{Meta}_{B}(M)$ is to keep choosing \texttt{NOP} in start state $s_0$ until $B$ finds a policy optimal \emph{for the original MDP $M'$}, and then to follow that policy from $s_0$.  Therefore, for any number $K$, the decision problem of whether there exists a policy with expected cost less than $K$ for $M'$ can be answered by optimally solving $\texttt{Meta}_{B}(M)$. The converse is also true: deciding $M'$ for a threshold $K$ decides $\texttt{Meta}_{B}(M)$ for that threshold as well.
}

\comment{
For the second part of the proof, we perform a reduction from the class of SSP MDPs to the class of metareasoning problems with respect to a fixed polynomial optimal linear programming solver $B$. The $P$-completeness of SSP MDPs follows from the fact that discounted-reward MDPs, one of SSP MDP subclasses \cite{bertsekas-book}, is $P$-complete \cite{papa-mos87}, and that SSP MDPs are solvable in polynomial time using linear programming, as mentioned above. Given an SSP MDP $M$, we convert it into a metareasoning problem by building a new MDP $M'$ identical to $M$ but having an additional action, \texttt{NOP}, that has a cost of 0 in any state, and leads to state $s_0$ if executed in any state. If $M$ already has such an action, there is no need to add another copy of it. $M'$ is produced from $M$ by $|S|$ parallel computers, each writing the transition function and cost function values for the \texttt{NOP} action from some state $s$ to $s_0$. Each such computer just needs to write down two values (``1" for the transition probability, ``0" for the transition cost), and hence works in polylogarithmic (in fact, constant) time, thereby implementing an $NC$-reduction. Now, consider the constructed MDP $M'$ and the metareasoning problem $\texttt{Meta}_{B}(M')$ for it. Since ``thinking'' in it (i.e., executing \texttt{NOP}) doesn't cost anything, the optimal policy for it is to keep choosing \texttt{NOP} in the start state until $B$ finds an optimal policy for $M'$, and then to follow that policy from $s_0$. Note that removing \texttt{NOP} from an optimal policy for $\texttt{Meta}_{B}(M')$ yields an optimal policy for $M$. Therefore, for any number $K$, the decision problem of whether there exists a policy with expected cost less than $K$ for $M$ can be answered by optimally solving $\texttt{Meta}_{B}(M')$ and vice versa, concluding the proof.

\end{proof}
}

At first glance, the results above look encouraging. 
However, upon closer inspection they reveal several subtleties making optimal metareasoning utterly impractical. First, although both SSP MDPs and their metareasoning counterparts with respect to an optimal polynomial-time solver are in $P$, doing metareasoning for a given MDP $M$ is appreciably more expensive than solving \emph{that MDP itself}. \comment{Indeed, at least if the metareasoning problem is derived from $M$ literally as Definition \ref{def:meta} dictates, the time for solving it will usually be characterized by a polynomial of a significantly higher degree than required for $M$. 
This additional complexity necessitates planning for the process of metareasoning, i.e., doing meta-metareasoning, and in general causes the issue of infinite regress--an unbounded nested sequence of ever-costlier reasoning problems.}Given that the additional complexity due to metareasoning cannot be ignored, the agent now faces the new challenge of allocating computational time between metareasoning and planning for the base problem. This challenge is a meta-metareasoning problem, and ultimately causes infinite regress, an unbounded nested sequence of ever-costlier reasoning problems.

Second, constructing $\texttt{Meta}_{B}(M)$ by running $B$ on $M$, as the proof of Theorem \ref{thm:poly} proceeds, may entail solving $M$ in the process of metareasoning. While the proof doesn't show that this is the only way of constructing $\texttt{Meta}_{B}(M)$, without making additional assumptions about $B$'s operation one cannot exclude the possibility of having to run $B$ until convergence and thereby completely solving $M$ even before $\texttt{Meta}_{B}(M)$ is fully formulated. Such a construction would defeat the purpose of metareasoning.


Third, the validity of Theorems \ref{thm:poly} and \ref{thm:pcomp} relies on an implicit crucial assumption that the transitions of solver $B$ on the base MDP $M$ are known in advance. Without this knowledge, $\texttt{Meta}_{B}(M)$ turns into a reinforcement learning problem \cite{sutton-98}, which further increases the complexity of metareasoning and the need for simulating $B$ on $M$. Neither of these is viable in reality.

The difficulties with optimal metareasoning motivate the development of approximation procedures. In this regard, the preceding analysis provides two important insights. It suggests that, since running $B$ on $M$ until halting is infeasible, it may be worth trying to \emph{predict} $B$'s progress on $M$. 
Many existing MDP algorithms have clear operational patterns, e.g., evaluating policies in the decreasing order of their cost, as policy iteration does \cite{howard-60}. Regularities like these can be of value in forecasting the benefit of running $B$ on $M$ for additional cycles of thinking. We now focus on exploring approximation schemes that can leverage these patterns.

\section{Algorithms for Approximate Metareasoning}
Our approach to metareasoning is guided by \emph{value of computation} (VOC) analysis. In contrast to previous work that formulates $VOC$ for single actions or decision-making problems \cite{horvitz-uai87,horvitz-ijcai89,russell-aij91}, we aim to formulate $VOC$ for online planning. For a given metareasoning problem $\texttt{Meta}_{B}(M)$, $VOC$ at any encountered state $s^m = (s, \chi)$ is exactly the difference between the Q-value of the agent following $f(s, \chi)$ (the action recommended by the current policy of the base MDP $M$) and the Q-value of the agent taking \texttt{NOP} and thinking:
\begin{align}
VOC(s^m) = Q^*(s^m, f(s,\chi)) - Q^*(s^m, \texttt{NOP}). 
\end{align}
$VOC$ captures the difference in long-term utility between thinking and acting as determined by these Q-values. An agent should take the \texttt{NOP} action and think when the $VOC$ is positive. Our technique aims to evaluate $VOC$ by estimating $Q^*(s^m, f(s,\chi))$ and $Q^*(s^m, \texttt{NOP})$. 
However, attempting to estimate these terms in a near-optimal manner ultimately runs into the same difficulties as solving $\texttt{Meta}_{B}(M)$, such as simulating the agent's thinking process many steps into the future, and is likely infeasible. Therefore, fast approximations for the Q-values will generally have to rely on simplifying assumptions. 
\comment{However, we observe that the optimal policy for $\texttt{Meta}_{B}(M)$ is fully dictated by VOC's sign:
\begin{equation}\label{eq:obs}
	\pi^*(s^m) =  \begin{cases}
	\texttt{NOP} \quad \text{if $\mbox{VOC}(s^m) \geq 0$} \\
  f(s, \chi) \quad \text{if $\mbox{VOC}(s^m) < 0$}
    \end{cases}
    \end{equation}
This equation tells us that the agent's decision making can be robust to losses of accuracy that come from approximations. }We rely on performing greedy metareasoning analysis as has been done in past studies of metareasoning \cite{horvitz-ijcai89,russell-aij91}:

\vspace{0.05in}

\noindent
\emph{\textbf{Meta-Myopic Assumption.} In any state $s^m$ of the meta-MDP, we assume that after the current step, the agent will never again choose \texttt{NOP}, and hence will never change its policy.}

\vspace{0.05in}

This meta-myopic assumption is important in allowing us to reduce $VOC$ estimation to predicting the improvement in the value of the base MDP policy following a single thinking step. 
The weakness of this assumption is that opportunities for subsequent policy improvements are overlooked. In other words, the $VOC$ computation only reasons about the current thinking opportunity. Nonetheless, in practice, we compute $VOC$ at every timestep, so the agent \emph{can} still think later. Our experiments show that our algorithms perform well in spite of their meta-myopicity.

\comment{In practice, the metareasoning agent won't be able to compute the Q-value functions and will instead hold some belief about their values and how they will change with further computation, so $Q$ is a random variable over the agent's beliefs. }

\subsection{Implementing Metareasoning with BRTDP}

We begin the presentation of our approximation scheme with the selection of $B$, the agent's thinking algorithm. 
Since approximating $Q^*(s^m, f(s,\chi))$ and $Q^*(s^m, \texttt{NOP})$ essentially amounts to assessing policy values, we would like an online planning algorithm that provides efficient policy value approximations, preferably with some guarantees. Having access to these policy value approximations enables us to design approximate metareasoning algorithms that can evaluate $VOC$ efficiently in a domain-independent fashion.

One algorithm with this property is Bounded RTDP (BRTDP) \cite{mcmahan-icml05}. It is an anytime planning algorithm based on RTDP \cite{barto-aij95}. Like RTDP, BRTDP maintains a lower bound on an MDP's optimal value function $V^*$, which is repeatedly updated via Bellman backups as BRTDP simulates trials/rollouts to the goal, making BRTDP's configuration-to-configuration transition function $T^{B(M)}(\chi, \chi')$ stochastic. A key difference is that in addition to maintaining a lower bound, it also maintains an upper bound, updated in the same conceptual way as the lower one. If BRTDP is initialized with a \emph{monotone} upper-bound heuristic, then the upper-bound decreases monotonically as BRTDP runs. The construction of domain-independent monotone bounds is beyond the scope of this paper, but is easy for the domains we study in our experiments. Another key difference between BRTDP and RTDP is that if BRTDP is stopped before convergence, it returns an action greedy with respect to the upper, not lower bound. This behavior guarantees that the expected cost of a policy returned at any time by a monotonically-initialized BRTDP is no worse than BRTDP's current upper bound. Our metareasoning algorithms utilize these properties to estimate $VOC$. In the rest of the discussion, we assume that BRTDP is initialized with a monotone upper-bound heuristic.


\comment{that starts with an upper and lower bounds on the optimal value function, $V^*$, maintains them as it runs. The initial bounds come from two heuristics. Just like RTDP, BRTDP chooses actions greedily w.r.t. the lower (admissible) bound when running trials, guaranteeing convergence to the optimal policy almost surely. In contrast to RTDP, however, if stopped before it converges, it returns an action greedy w.r.t. the upper bound, in order to provide anytime performance guarantees. BRTDP's properties of interest to us from the original BRTDP paper are informally summarized below:

\begin{theorem}\cite{mcmahan-icml05}\label{thm:brtdp} If BRTDP  is initialized with a \emph{monotone} upper-bound heuristic, the expected cost of the policy greedy w.r.t. the upper bound at any time until BRTDP converges is no worse than the upper bound. A monotone upper bound monotonically decreases as BRTDP runs. A monotone lower bounds monotonically increases as BRTDP runs.
\end{theorem}

These properties derive from the fact that, by its definition, a monotone value function of an MDP always decreases under Bellman backup updates if it is an upper bound on $V^*$ and increases if it is a lower bound. While Theorem \ref{thm:brtdp} does not imply that the expected cost of the policy greedy w.r.t. to a monotone upper bound monotonically drops as BRTDP progresses, the upper-bound value function gives an idea of how bad such a policy could be. The construction of domain-independent monotone bounds is beyond the scope of this paper, but in our empirical evaluation, we describe later the design of monotone bounds. In the rest of the discussion, we assume that the bounds BRTDP has been initialized with are monotone.
}

\comment{
BRTDP must be given initial upper and lower bounds on the optimal value function. An upper bound is \emph{monotone} if after every Bellman backup, the upper bound decreases. Although constructing a monotone upper bound heuristic is not always trivial, our algorithms assume that the upper bounds are monotone. In many domains, including the ones we experiment on, monotone upper bounds are trivially constructed. Similarly, a lower bound is monotone if after every Bellman backup, the lower bound increases.
}

\subsection{Approximating VOC}

We now show how BRTDP's properties help us with estimating the two terms in the definition of $VOC$, $Q^*(s^m, f(s,\chi))$ and $Q^*(s^m, \texttt{NOP})$. We first assume that one ``thinking cycle'' of BRTDP (i.e., executing \texttt{NOP} once and running BRTDP in the meantime, resulting in a transition from BRTDP's current configuration $\chi$ to another configuration $\chi'$) corresponds to completing some fixed number of BRTDP trials from the agent's current world state $s$. 

\subsubsection{Estimating $\mathbf{Q^*(s^m, \texttt{NOP})}$}

We first describe how to estimate the value of taking the \texttt{NOP} action (thinking). At the highest level, this estimation first involves writing down an expression for $Q^*(s^m, \texttt{NOP})$, making a series of approximations for different terms, and then modeling the behavior of how BRTDP's upper bounds on the Q-value function drop in order to compute the needed quantities. 

When opting to think by choosing \texttt{NOP}, the agent may transition to a different world state while \emph{simultaneously} updating its policy for the base problem. Therefore, we can express $Q^*(s^m, \texttt{NOP}) = $
\begin{align}\label{eq:nopor}
\sum_{s'} T(s, \texttt{NOP}, s') \sum_{\chi'} T^{B(M)}(\chi, \chi')V^*((s',\chi')).
\end{align}
Because of meta-myopicity, we have $V^*((s', \chi')) = V^{\chi'}(s')$ where $V^{\chi \prime}$ is the value function of the policy corresponding to $\chi'$ in the base MDP. However, this expression cannot be efficiently evaluated in practice, since we do not know BRTDP's transition distribution $T^{B(M)}(\chi, \chi')$ nor the state values $V^{\chi'}(s')$, forcing us to make further approximations. To do so, we assume $V^{\chi'}$ and $Q^{\chi'}$ are random variables, and rewrite $\sum_{\chi'}T^{B(M)}(\chi, \chi')V^{\chi'}(s')$ = 
\begin{align}
\sum_{a} P(A^{\chi'}_{s'} = a) E[Q^{\chi'}(s', a) | A^{\chi'}_{s'} = a].
\end{align}
where the random variable $A^{\chi'}_{s'}$ takes value $a$ iff $f(s', \chi') = a$ after one thinking cycle in state $(s, \chi)$. Intuitively, $P(A^{\chi'}_{s'} = a)$ denotes the probability that BRTDP will recommend action $a$ in state $s'$ after one thinking cycle. 
Now, let us denote the Q-value upper bound corresponding to BRTDP's current configuration $\chi$ as $\overline{Q}^\chi$. This value is \emph{known}. Then, let the upper bound corresponding to BRTDP's next configuration $\chi'$, be  $\overline{Q}^{\chi'}$. Because we do not know $\chi'$, this value is \emph{unknown}, and is a random variable. Because BRTDP selects actions greedily w.r.t. the upper bound, we follow this behavior and use the upper bound to estimate Q-value by assuming that $Q^{\chi'} = \overline{Q}^{\chi'}$. Since the value of $\overline{Q}^{\chi'}$ is unknown at the time of the $VOC$ computation, $P(A^{\chi'}_{s'} = a)$ and $E[\overline{Q}^{\chi'}(s', a) | A^{\chi'}_{s'} = a]$ are computed by integrating over the possible values of $\overline{Q}^{\chi'}$. We have that $E[\overline{Q}^{\chi'}(s', a) | A^{\chi'}_s = a] = $ 

\begin{align*}
\int_{\overline{Q}^{\chi'}(s', a)}\overline{Q}^{\chi'}(s',a) \frac{P(A^{\chi'}_{s'} = a | \overline{Q}^{\chi'}(s', a)) P(\overline{Q}^{\chi'}(s', a))}{P(A^{\chi'}_{s'} = a)},
\end{align*}
and 
$P(A^{\chi'}_{s'} = a)=$
\begin{align*}
\int_{\overline{Q}^{\chi'}(s', a)} P(\overline{Q}^{\chi'}(s',a))  \prod_{a_i \neq a} P(\overline{Q}^{\chi'}(s', a_i) > P(\overline{Q}^{\chi'}(s', a)).
\end{align*}
Therefore, we must model the distribution that $\overline{Q}^{\chi'}$ is drawn from. We do so  by modeling the change $\Delta \overline{Q} = \overline{Q}^{\chi} - \overline{Q}^{\chi'}$, due to a single BRTDP thinking cycle that corresponds to a transition from configuration $\chi$ to $\chi'$. Since $\overline{Q}^{\chi}$ is known and fixed, estimating a distribution over possible $\Delta \overline{Q}$ gives us a distribution over $\overline{Q}^{\chi'}$.


Let $\hat{\Delta} \overline{Q}_{s,a}$ be the change in $\overline{Q}_{s,a}$ resulting from the most recent thinking cycle for some state $s$ and action $a$. We first assume that the change resulting from an additional cycle of thinking, $\Delta \overline{Q}_{s,a}$, will be no larger than the last change, $\Delta \overline{Q}_{s,a} \leq \hat{\Delta} \overline{Q}_{s,a}$. This assumption is reasonable, because we can expect the change in bounds to decrease as BRTDP converges to the true value function. Given this assumption, we must choose a distribution $D$ over the interval $[0, \hat{\Delta}\overline{Q}_{s,a}]$ such that for the next thinking cycle, $\Delta \overline{Q}_{s,a} \sim D$. Figure \ref{drops}a illustrates these modeling assumptions for two hypothetical actions, $a_1$ and $a_2$. 

One option is to make $D$ uniform, so as to represent our poor knowledge about the next bound change. Then, computing $P(A^{\chi'}_{s'} = a)$ involves evaluating an integral of a polynomial of degree $O(|A|)$ (the product of $|A|-1$ CDF's and $1$ PDF), and computing $E[\overline{Q}^{\chi'}(s', a) | A^{\chi'}_s = a]$ also entails evaluating an integral of degree $O(|A|)$, and thus computing these quantities for all actions in a state can be computed in time $O(|A|^2)$. Since the overall goal of this subsection, approximating $Q^*(s^m, \texttt{NOP})$, requires computing $P(A^{\chi'}_{s'} = a)$ for all actions in all states where \texttt{NOP} may lead, assuming there are no more than $K << |A|$ such states, the complexity becomes $O(K|A|^2)$ for each state visited by the agent on its way to the goal.

A weakness of this approach is that the changes in the upper bounds for different actions are modeled independently. 
For example, if the upper bounds for two actions in a given state decrease by a large amount in the previous thinking step, then it is unlikely that in the next thinking step one of them will drop dramatically while the other drops very little. This independence can cause the amount of uncertainty in the upper bound at the next thinking step to be overestimated, leading to $VOC$ being overestimated as well. 

Therefore, we create another version of the algorithm assuming that the speed of decrease in Q-value upper bounds for all actions are perfectly correlated; all ratios between future drops in the next thinking cycle are equal to the ratios between the observed drops in the last thinking cycle. Formally, for a given state $s$, we let $\rho \sim$ Uniform[0, 1]. Then, let $\Delta \overline{Q}_{s,a} = \rho \cdot \hat{\Delta}\overline{Q}_{s,a}$ for all actions $a$.

Now, to compute $P(A^{\chi'}_{s'} = a)$, for each action $a$, we represent the range of its possible future Q-values $\overline{Q}^{\chi'}_{s,a}$ with a line segment $l_a$ on the unit interval [0,1] where $l_a(0) = \overline{Q}^{\chi}_{s,a}$ and $l_a(1) = \overline{Q}^{\chi}_{s,a} - \Delta \overline{Q}_{s,a}$. Then, $P(A^{\chi'}_{s'} = a)$ is simply the proportion of $l_a$ which lies below all the other lines representing all other actions. We can na{\"i}vely compute these probabilities in time $O(|A|^2)$ by enumerating all intersections. Similarly, computing $E[\overline{Q}^{\chi'}(s', a) | A^{\chi'}_s = a]$ is also easy. This value is the mean of the portion of $l_a$ that is beneath all other lines. Figure \ref{drops}b illustrates these computations.

\begin{figure}[ht]
\centering
\includegraphics[scale = 0.45]{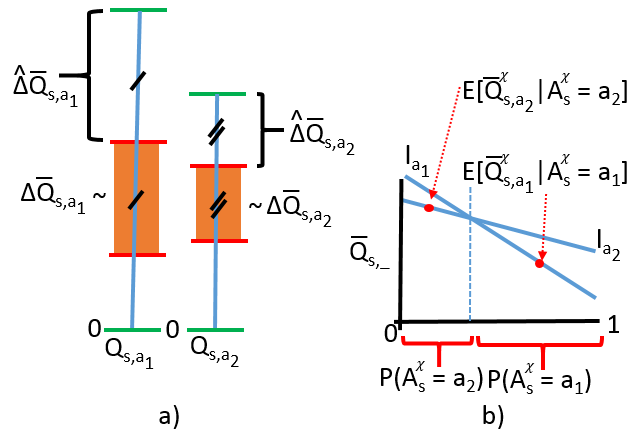}
\caption{a) Hypothetical drops in upper bounds on the Q-values of two actions, $a_1$ and $a_2$. We assume the next Q-value drop resulting from another cycle of thinking, $\Delta \overline{Q}$, is drawn from a range equal to the last drop from thinking, $\hat{\Delta} \overline{Q}$  b) Assuming perfect correlation in the speed of decrease in the Q-value upper bounds, as the upper bounds of the two actions drop from an additional cycle of thinking, initially $a_2$ has a better upper bound, but eventually $a_1$ overtakes $a_2$.}
\label{drops}
\end{figure}

Whether or not we make the assumption of action independence, we further speed up the computations by only calculating $E[\overline{Q}^{\chi'}(s', a) | A^{\chi'}_s = a]$ and $P(A^{\chi'}_{s'} = a)$ for the two ``most promising'' actions $a$, those with the lowest expectation of potential upper bounds. This limits the computation time to the time required to determine these actions (linear in $|A|$), and makes the time complexity of estimating $Q^*(s^m, \texttt{NOP})$ for one state $s$ be $O(K|A|)$ instead of $O(K|A|^2)$.



\subsubsection{Estimating $\mathbf{Q^*(s^m, f(s,\chi))}$}
Now that we have described how to estimate the value of taking the \texttt{NOP} action, we describe how to estimate the value of taking the currently recommended action, $f(s,\chi)$. We estimate $Q^*(s^m, f(s,\chi))$ by computing E[$\overline{Q}^{\chi'}(s, f(s, \chi))$], which takes constant time, keeping the overall time complexity linear. The reason we estimate $Q^*(s^m, f(s,\chi))$ using future Q-value upper bound estimates based on a probabilistic projection of $\chi'$, as opposed to our current Q-value upper bounds based on the current configuration $\chi$, is to make use of the more informed bounds derived at the future utility estimation. As the BRTDP algorithm is given more computation time, it can more accurately estimate the upper bound of a policy. This type of approximation has been justified before \cite{russell-aij91}. \comment{Estimating current values using future projections is not unprecedented and indeed justified \cite{russell-aij91}, as we should always use the best information we have to estimate the values of our actions. }In addition, using future utility estimates in both estimating $Q^*(s^m, f(s,\chi))$ and $Q^*(s^m, \texttt{NOP})$ provides a consistency guarantee: \emph{if thinking leads to no policy change, then our method estimates $VOC$ to be zero}. 

\comment{If $a = f(s,\chi)) \neq \texttt{NOP}$, the agent will not do any thinking in the current step.  By meta-myopicity, the agent will not do any thinking after acting either, so we approximate the Q-value of $a$ in the meta-MDP as $Q^*(s^m, f(s,\chi)) = Q^\chi(s, a)$ where $Q^\chi$ is the Q-value function of the policy corresponding to BRTDP's configuration $\chi$. BRTDP's bounds on the MDP's value function immediately give us bounds on $Q^*(s, a)$ in any state $s$. By BRTDP's properties, we know that $Q^*(s, a) \in [\underline{Q}(s,a), \overline{Q}(s, a)]$. While this interval can be large, especially when BRTDP has just started running, we can maintain a \emph{belief distribution} over it, denoted as $\mathcal{Q}(s, a)$. This, in turn, lets us approximate $Q^*(s, a)$, and hence $Q^*(s^m, f(s,\chi))$, with the expectation of the current such belief:

\begin{align}\label{eq:approxQa}
Q^*(s^m, f(s,\chi)) \approx E[\mathcal{Q}(s, a)]
\end{align}}

\subsection{Putting It All Together}
The core of our algorithms involves the computations we have described, in every state $s$ the agent visits on the way to the goal. In the experiments, we denote \texttt{UnCorr Metareasoner} as the metareasoner that assumes the actions are uncorrelated, and \texttt{Metareasoner} as the metareasoner that does not make this assumption. To complete the algorithms, we ensure that they decide the agent should think for another cycle if $\hat{\Delta}\overline{Q}_{s,a}$ isn't yet available for the agent's current world state $s$ (e.g., because BRTDP has never updated bounds for this state's Q-value so far), since $VOC$ computation is not possible without prior observations on $\hat{\Delta}\overline{Q}_{s,a}$. Crucially, all our estimates make metareasoning take time only linear in the number of actions, $O(K|A|)$, per visited state.

\comment{
\begin{algorithm}
\caption{General Metareasoner}
\label{algorithm}
\begin{algorithmic}
\State Compute $\hat{u}_{s,a} = \overline{Q}^{\chi}(s,a) - \hat{\Delta}\overline{Q}_{s,a}$ for all actions $a$
\State Let $\hat{A}$ be the set of 2 actions with the lowest $\hat{u}$.
\If {$\hat{\Delta} \overline{Q}_{s,a}$ does not exist for any action $a \in \hat{A}$}
\State \Return \texttt{NOP}
\EndIf
\State Approximate $\hat{V} \approx VOC(s^m) $
\If {$\hat{V} > 0$}
\State \Return \texttt{NOP}
\Else
\State \Return $f(s, \chi)$.
\EndIf 
\end{algorithmic}
\end{algorithm}
}

\section{Experiments}

\comment{
\begin{table*}[tb]
\vspace{-0.2in}
\small
\centering
\begin{tabular}{@{}llll|lllll@{}}
\toprule
                & Heuristic & OptimalBase & MG$^{UB}$ & Think*Act & Prob  & NoInfoThink & UnCorr MR & MR    \\ \midrule
Stochastic (Thinking)    & 1089      & 103.9        & 10.5      & 265.9     & 250.5 & 220.1       & 253.7     & \textbf{207.3} \\
Stochastic (Acting)    & 767.3     & 68.1         & 11.3      & 120.3     & 109.3 & 104.3       & 166.5     & \textbf{91.5}  \\
Traps           & 979       & 113.5        & 8.6       & 678.6     & 393.8 & 379.6       & 389.8     & \textbf{347.9} \\
DynamicNOP-1 & 251.4     & 66           & 3.8       & \textbf{168.6}     & 623.5 & 378.4       & 235.3     & 343.1 \\
DynamicNOP-2 & 119.4     & 66           & 1.8       & 156.4     & 99.2  & 131.7       & \textbf{80.2}      & \textbf{80}    \\ \bottomrule
\end{tabular}
\vspace{-0.1in}
\caption{\small On the right, performance summary for \texttt{Metareasoner} (MR) and \texttt{Uncorr Metareasoner} (UnCorr MR), vs. the baselines (Think*Act, Prob, NoInfoThink) across test domains. On the left, upper bounds of metareasoning gaps ($MG^{UB}$) for all test domains, defined as the ratio of the expected cost of the initial heuristic policy (Heuristic) and that of an optimal one (OptimalBase) at the initial state.
\vspace{-0.1in}}
\label{resultstable}
\end{table*}
}

We evaluate our metareasoning algorithms in several synthetic domains designed to reflect a wide variety of factors that could influence the value of metareasoning. Our goal is to demonstrate the ability of our algorithms to estimate the value of computation and adapt to a plethora of world conditions. The experiments are performed on four domains, all of which are built on a $100 \times 100$ grid world, where the agent can move between cells at each time step to get to the goal located in the upper right corner. To initialize the lower and upper bounds of BRTDP, we use the zero heuristic and an appropriately scaled (multiplied by a constant) Manhattan distance to the goal, respectively. 
\subsection{Domains}
The four world domains are as follows:
\begin{itemize}
\item \emph{Stochastic.} This domain adds winds to the grid world to be analogous to worlds with stochastic state transitions. Moving against the wind causes slower movement across the grid, whereas moving with the wind results in faster movement. 
The agent's initial state is the southeast corner and the goal is located in the northeast corner. We set the parameters of the domain as follows so that there is a policy that can get the agent to the goal with a small number of steps (in tens instead of hundreds) and where the winds significantly influence the number of steps needed to get to the goal: The agent can move 11 cells at a time and the wind has a pushing power of 10 cells. The next location of the agent is determined by adding the agent's vector and the wind's vector except when the agent decides to think (executes \texttt{NOP}), in which case it stays in the same position. Thus, the winds can never push the agent in the opposite direction of its intention. The prevailing wind direction over most of the grid is northerly, except for the column of cells containing the goal and starting position, where it is southerly. Note that this southerly wind direction makes the initial heuristic extremely suboptimal. To simulate stochastic state transitions, the winds have their prevailing direction in a given cell with 60\% probability; with 40\% probability they have a direction orthogonal to the prevailing one (20\% easterly and 20\% westerly). 

We perform a set of experiments on this simplest domain of the set, to observe the effect of different costs for thinking and acting on the behaviors of algorithms. We vary the cost of thinking and acting between 1 and 15. When we vary the cost of thinking, we fix the cost of acting at 11, and when we vary the cost of acting, we fix the cost of thinking at 1.

\item \emph{Traps.} This domain modifies the \emph{Stochastic} domain to resemble the setting where costs for thinking and acting are not constant among states. To simplify the parameter choices, we fix the cost of thinking and acting to be equal, respectively, to the agent's moving distance and wind strength. Thus, the cost of thinking is 10 and the cost of acting is 11. To vary the costs of thinking and acting between states, we make thinking and acting at the initial state extremely expensive at a cost of 100, about 10 times the cost of acting and thinking in the other states. Thus, the agent is forced to think outside its initial state in order to perform optimally.

\item \emph{DynamicNOP-1.} In the previous domains, executing a \texttt{NOP} does not change the agent's state. In this domain, thinking causes the agent to move in the direction of the wind, causing the agent to stochastically transition as a result of thinking. In this domain, the cost of thinking is composed of both explicit and implicit components; a static value of 1 unit and a dynamic component determined by stochastic state transitions as a result of thinking. The static value is set to 1 so that the dynamic component can dominate the decisions about thinking. The agent starts in cell $(98,1)$. We change the wind directions so that there are easterly winds in the most southern row and northerly winds in the most eastern row that can push the agent very quickly to the goal. Westerly winds exist everywhere else, pushing the agent away from the goal. We change the stochasticity of the winds so that the westerly winds change to northerly winds with 20\% probability, and all other wind directions are no longer stochastic. We lower the amount of stochasticity to better see if our agents can reason about the implicit costs of thinking. The wind directions are arranged so that there is potential for the agent to improve upon its initial policy but thinking is risky as it can move the agent to the left region, which is hard to recover from since all the winds push the agent away from the goal.

\item \emph{DynamicNOP-2.} This domain is just like the previous domain, but we change the direction of the winds in the northern-most row to be easterly. These winds also do not change directions. In this domain, as compared to the previous one, it is less risky to take a thinking action; even when the agent is pushed to the left region of the board, the agent can find strategies to get to the goal quickly by utilizing the easterly wind at the top region of the board. 
\end{itemize}

\comment{
We now describe the three domains at a high level, with diagrams and more detailed descriptions provided in the appendix. In the \emph{Stochastic} domain, the agent starts in the bottom right corner. Winds push it away from the goal if it tries reach the goal along the shortest Euclidean path, but push the agent towards the goal if it takes alternate routes. In this domain, metareasoning can help the agent discover paths to the goal that are more efficient than the one recommended by the (Manhattan) heuristic. The \emph{Traps} domain extends the \emph{Stochastic} domain by charging an extremely high cost for acting and thinking in the initial state. This domain tests if the algorithms can adapt its metareasoning behavior to variation in costs. In both of these domains, \texttt{NOP} doesn't carry the agent to another state, but its cost varies depending on the wind, intuitively reflecting the effort the agent must exert to stay in place. In the \emph{DynamicNOP} domain, the agent moves according to world dynamics when it chooses \texttt{NOP}.}

\subsection{The Metareasoning Gap}
\comment{
The value of approximate metareasoning differs across domains and their parameters (action costs, etc). E.g., in the vanilla grid world with no winds, the initial policy, implied by the Manhattan heuristic, is optimal, and optimal metareasoning policy is to never think. 
In contrast, in \emph{Stochastic}, metareasoning can lead to a much improved policy. }

We introduce the concept of the \emph{metareasoning gap} as a way to quantify the potential improvement over the initial heuristic-implied policy, denoted as \texttt{Heuristic}, that is possible due to optimal metareasoning. The metareasoning gap is the ratio of the expected cost of \texttt{Heuristic} for the base MDP to the expected cost of the optimal metareasoning policy, computed at the initial state.  Exact computation of the metareasoning gap requires evaluating the optimal metareasoning policy and is infeasible. Instead, we compute an upper bound on the metareasoning gap by substituting the cost of the optimal metareasoning policy with the cost of the optimal policy for the \emph{base} MDP (denoted \texttt{OptimalBase}). The metareasoning gap can be no larger than this upper bound, because metareasoning can only add cost to \texttt{OptimalBase}. We quantify each domain with this upper bound ($MG^{UB}$) in Table \ref{resultstable} and show that our algorithms for metareasoning provide significant benefits when $MG^{UB}$ is high. We note that none of the algorithms use the metareasoning gap in its reasoning. \\

\begin{table}[h]
\vspace{-0.2in}
\centering
\begin{tabular}{@{}llll@{}}
\toprule
                & Heuristic & OptimalBase & MG$^{UB}$   \\ \midrule
Stochastic (Thinking)    & 1089      & 103.9        & 10.5   \\
Stochastic (Acting)    & 767.3     & 68.1         & 11.3    \\
Traps           & 979       & 113.5        & 8.6   \\
DynamicNOP-1 & 251.4     & 66           & 3.8   \\
DynamicNOP-2 & 119.4     & 66           & 1.8  \\ \bottomrule
\end{tabular}
\caption{Upper bounds of metareasoning gaps ($MG^{UB}$) for all test domains, defined as the ratio of the expected cost of the initial heuristic policy (\texttt{Heuristic}) to that of an optimal one (\texttt{OptimalBase}) at the initial state.
\vspace{-0.1in}}
\label{resultstable}
\end{table}

\subsection{Experimental Setup}
We compare our metareasoning algorithms against a number of baselines. The \texttt{Think$^*$Act} baseline simply plans for $n$ cycles at the initial state and then executes the resulting policy, without planning again. We also consider the \texttt{Prob} baseline, which chooses to plan with probability $p$ at each state, and executes its current policy with probability $1-p$.  An important drawback of these baselines is that their performance is sensitive to their parameters $n$ and $p$, and the optimal parameter settings vary across domains. The \texttt{NoInfoThink} baseline plans for another cycle if it does not have information about how the BRTDP upper bounds will change. This baseline is a simplified version of our algorithms that does not try to estimate the $VOC$.

For each experimental condition, we run each metareasoning algorithm until it reaches the goal 1000 times and average the results to account for stochasticity. Each BRTDP trajectory is 50 actions long.

\subsection{Results}

\comment{
\blindtext

\begin{figure*}
\begin{floatrow}
\ffigbox[0.33\linewidth][]{%
\includegraphics[scale = 0.4]{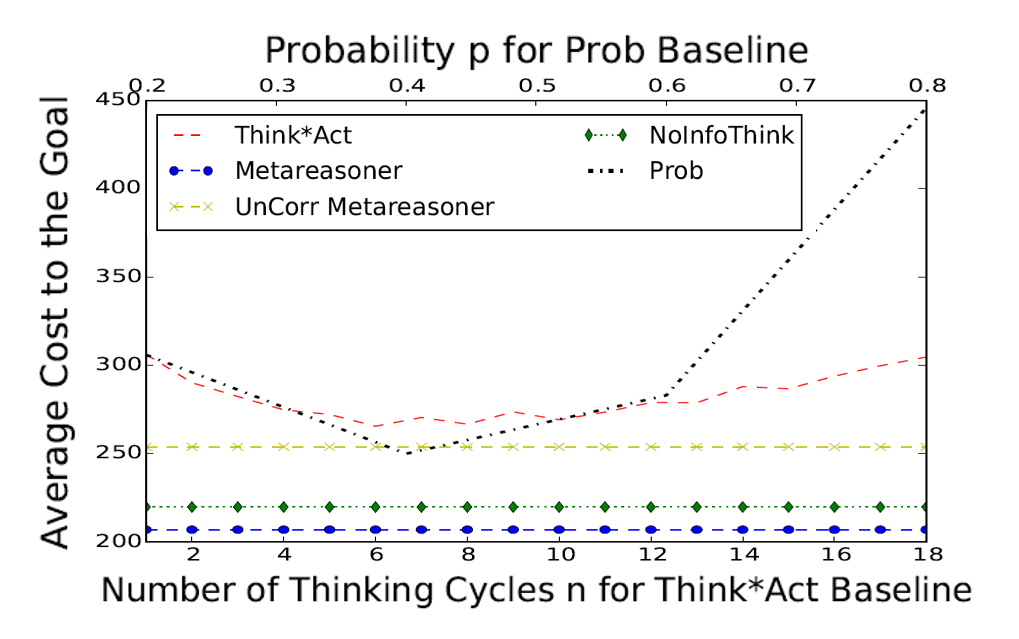}
\label{winds-dCOT-average}
}
{
  \caption{ Comparison of algorithms with baselines at the \emph{Stochastic} domain. }
}

\capbtabbox[0.66\linewidth][]{%
 
\small
\centering
\begin{tabular}{@{}llll|lllll@{}}
\toprule
                & Heuristic & OptimalBase & MG$^{UB}$ & Think*Act & Prob  & NoInfoThink & UnCorr MR & MR    \\ \midrule
Stochastic (Thinking)    & 1089      & 103.9        & 10.5      & 265.9     & 250.5 & 220.1       & 253.7     & 207.3 \\
Stochastic (Acting)    & 767.3     & 68.1         & 11.3      & 120.3     & 109.3 & 104.3       & 166.5     & 91.5  \\
Traps           & 979       & 113.5        & 8.6       & 678.6     & 393.8 & 379.6       & 389.8     & 347.9 \\
DynamicNOP 1 & 251.4     & 66           & 3.8       & 168.6     & 623.5 & 378.4       & 235.3     & 343.1 \\
DynamicNOP 2 & 119.4     & 66           & 1.8       & 156.4     & 99.2  & 131.7       & 80.2      & 80    \\ \bottomrule
\end{tabular}
\label{resultstable}

}
{%
  \caption{Summary of experimental results.}%
}
\end{floatrow}
\end{figure*}
}

In \emph{Stochastic}, we perform several experiments by varying the costs of thinking (\texttt{NOP}) and acting. We observe (figures can be found in the appendix) that when the cost of thinking is low or when the cost of acting is high, the baselines do well with high values of $n$ and $p$, and when the costs are reversed, smaller values do better. This trend is expected, since lower thinking cost affords more thinking, but these baselines don't allow for predicting the specific ``successful'' $n$ and $p$ values in advance. \texttt{Metareasoner} does not require parameter tuning and beats even the best performing baseline for all settings. Figure \ref{bigfigure}a compares the metareasoning algorithms against the baselines when the results are averaged over the various settings of the  cost of acting, and Figure \ref{bigfigure}b shows results averaged over the various settings of the  cost of thinking. \texttt{Metareasoner} does extremely well in this domain because the metareasoning gap is large, suggesting that metareasoning can improve the initial policy significantly. Importantly, we see that \texttt{Metareasoner} performs better than \texttt{NoInfoThink}, which shows the benefit from reasoning about how the bounds on Q-values will change.  \texttt{UnCorr Metareasoner} does not do as well as \texttt{Metareasoner}, probably because the assumption that actions' Q-values are uncorrelated does not hold well. 

\begin{figure*}[t]
\vspace{-0.2in}
\centering
\includegraphics[scale = 0.49]{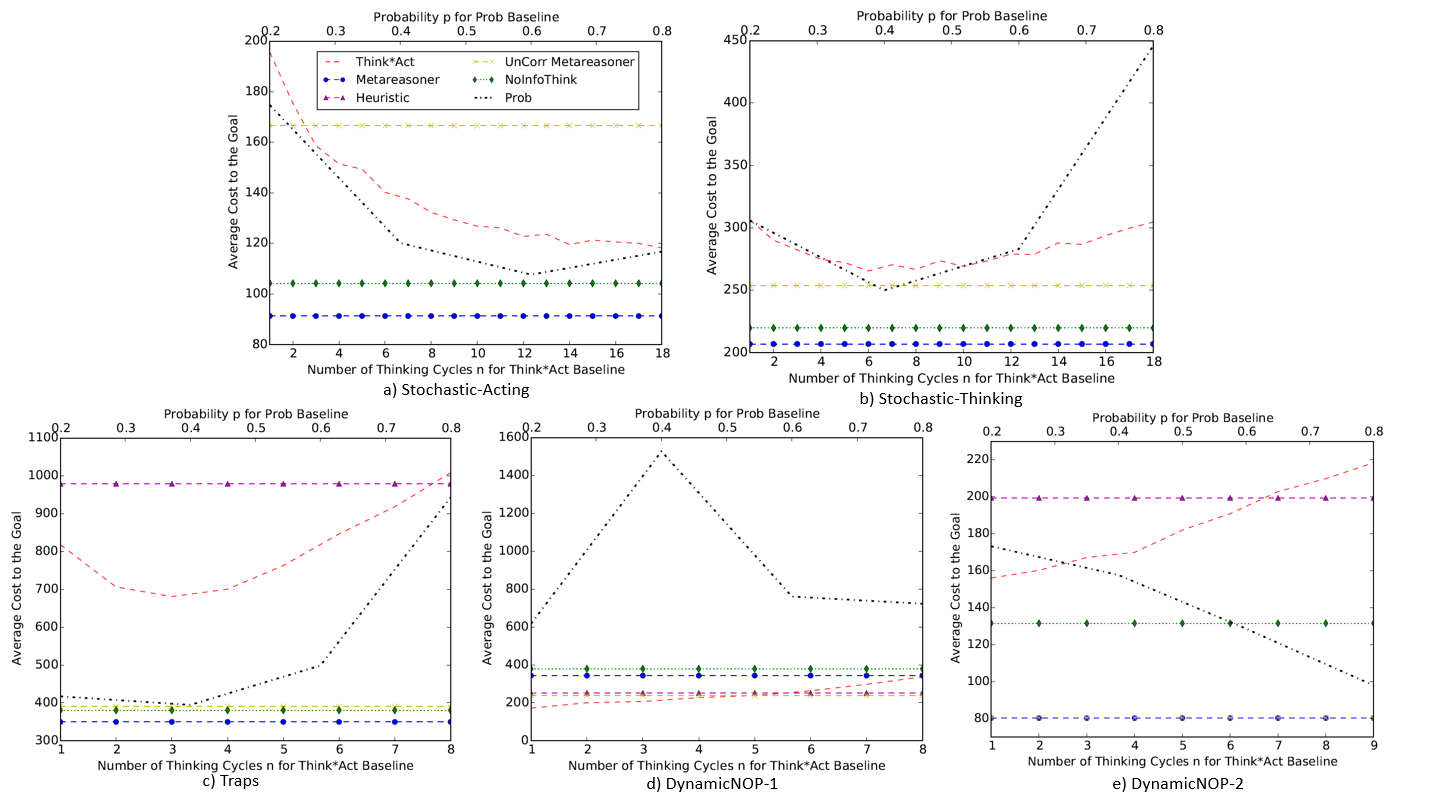}
\vspace{-0.3in}
\caption{Comparison of \texttt{Metareasoner} and \texttt{Uncorr Metareasoner} with baselines on experimental domains. Some figures do not include \texttt{Heuristic} when it performs especially poorly for readability.  
\vspace{-0.1in}
}
\label{bigfigure}
\end{figure*}

We now turn to \emph{Traps}, where thinking and acting in the initial state incurs significant cost. Figure \ref{bigfigure}c again summarizes the results. \texttt{Think$^*$Act} performs very poorly, because it is limited to thinking only at the initial state. \texttt{Metareasoner} does well, because it figures out that it should not think in the initial state (beyond the initial thinking step), and should instead quickly move to safer locations. \texttt{Uncorr Metareasoner} also closes the metareasoning gap significantly, but again not as much as \texttt{Metareasoner}.

We now consider \emph{DynamicNOP-1}, a domain adversarial to approximate metareasoning, because winds almost everywhere push the agent away from the goal. There are only a few locations from which winds can carry the agent to the destination. 
Figure \ref{bigfigure}d shows that our algorithms do not achieve large gains. However, this result is not surprising. The best policy involves little thinking, because whenever the agent chooses to think, it is pushed away from the goal, and  thinking for just a few consecutive time steps can take the agent to states where reaching the goal is extremely difficult. Consequently, \texttt{Think$^*$Act} with 1-3 thinking steps turns out to be near-optimal, since it is pushed away from the goal only slightly and can use a slightly improved heuristic to head back. \texttt{Metareasoner} actually does well in many individual runs, but occasionally thinks longer due to $VOC$ computation stochasticity and can get stuck, yielding higher average policy cost. 
In particular, it may frequently be pushed into a state that it has never encountered before, where it must think again because it does not have any history about how BRTDP's bounds have changed in that state, and then subsequently get pushed into an unencountered state again. In this domain, our approximate algorithms can diverge away from an optimal policy, which would plan very little to minimize the risk of being pushed away from the goal. 

\comment{If the agent is unable to quickly discover one of a few ``jetstreams" that can quickly carry it to the goal, it will be trapped. As we would expect, the metareasoning gap of this domain is quite small in comparison to the previous domains, because the initial heuristic does not fall into such traps.}

\emph{DynamicNOP-2} 
provides the agent more opportunities to recover even if it makes a poor decision. Figure \ref{bigfigure}e demonstrates that our algorithms perform much better in \emph{DynamicNOP-2} than in \emph{DynamicNOP-1}.  In \emph{DynamicNOP-2}, even if our algorithms do not discover the jetstreams that can push it towards the goal from initial thinking, they are provided more chances to recover when they get stuck. When thinking can move the agent on the board, having more opportunities to recover reduces the risk associated with making suboptimal thinking decisions.  Interestingly, the metareasoning gap is decreased at the initial state by the addition of the extra jetstream. However, the metareasoning gap at many other states in the domain is increased, showing that the metareasoning gap at the initial state is not the most ideal way to characterize the potential for improvement via metareasoning in all domains.

\section{Conclusion and Future Work}
We formalize and analyze the general metareasoning problem for MDPs, demonstrating that metareasoning is only polynomially harder than solving the base MDP. Given the determination that optimal general metareasoning is impractical, we turn to approximate metareasoning algorithms, which estimate the value of computation by relying on bounds given by BRTDP. Finally, we empirically compare our metareasoning algorithms to several baselines on problems designed to reflect challenges posed across a spectrum of worlds, and show that the proposed algorithms are much better at closing large metareasoning gaps. 

We have assumed that the agent can plan only when it takes the \texttt{NOP} action. A generalization of our work would allow varying amounts of thinking as part of any action. Some actions may consume more CPU resources than others, and actions which do not consume all resources during execution can allocate the remainder to planning. We also can relax the meta-myopic assumption, so as to consider the consequences of thinking for more than one cycle. In many cases, assuming that the agent will only think for one more step can lead to underestimation of the value of thinking, since many cycles of thinking may be necessary to see significant value. This ability can be obtained with our current framework by projecting changes in bounds for multiple steps. However, in experiments to date, we have found that pushing out the horizon of analysis was associated with large accumulations of errors and poor performance due to approximation upon approximation from predictions about multiple thinking cycles.  Finally, we may be able to improve our metareasoners by learning about and harnessing more details about the base-level planner. In our \texttt{Metareasoner} approximation scheme, we make strong assumptions about how the upper bounds provided by BRTDP will change, but learning distributions over these changes may improve performance. More informed models may lead to accurate estimation of non-myopic value of computation. However, learning distributions in a domain-independent manner is difficult, since the planner's behavior is heavily dependent on the domain and heuristic at hand.

\bibliographystyle{named}
\bibliography{main}
\newpage
\appendix
\section{Appendix}
\subsection{Proof of Theorem 3}
\begin{figure}
\centering
\includegraphics[scale = 0.2]{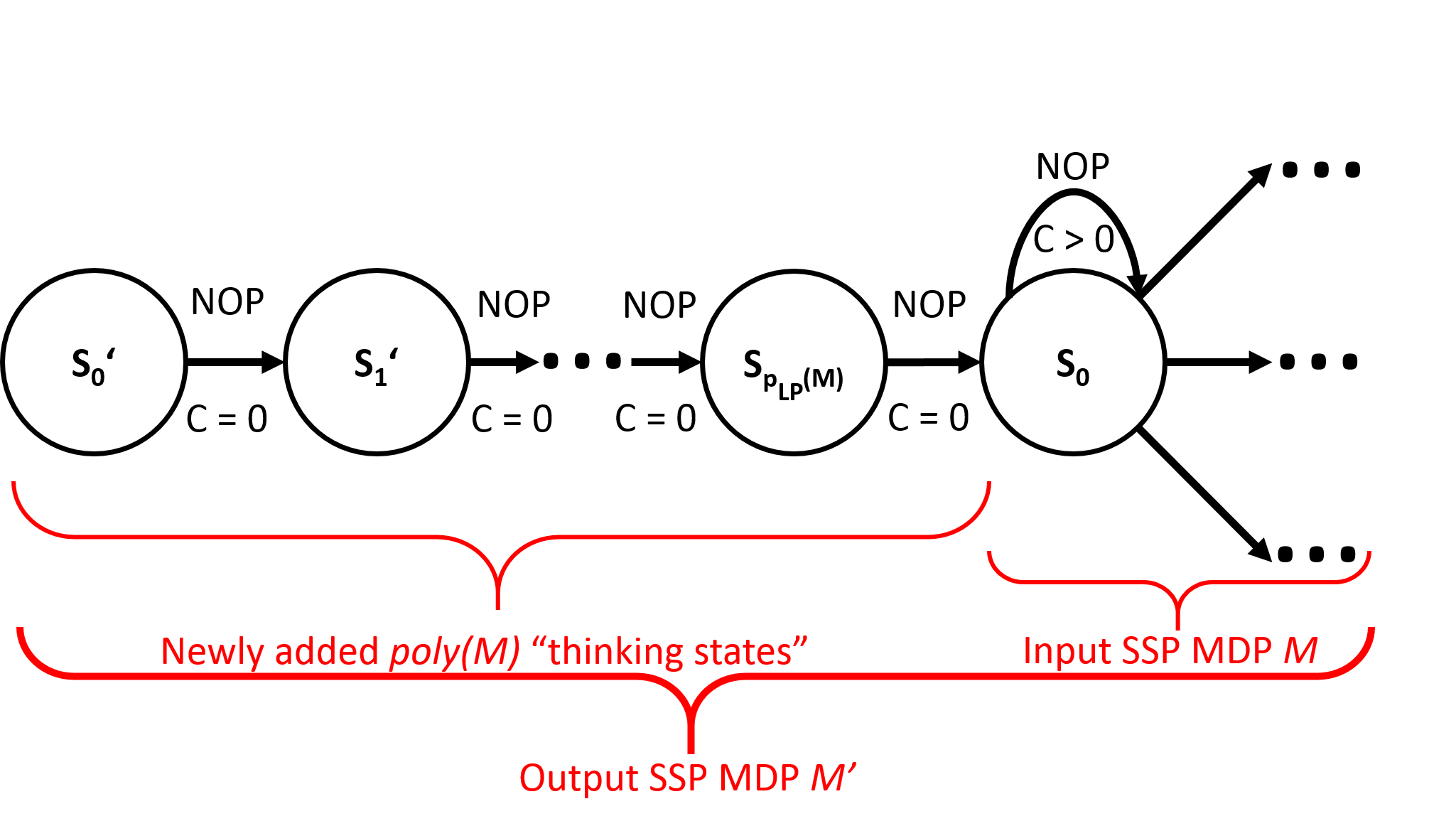}
\caption{The construction of a base MDP $M'$ for a metareasoning problem from an input SSP MDP $M$. }
\label{f:constr}
\end{figure}
\begin{proof}
By calling metareasoning $P$-complete we mean that there exists a Turing machine $B$ s.t. (1) for any input SSP MDP $M'$, $\texttt{Meta}_{B}(M')$ can be decided in time polynomial in $|M'|$, i.e., $\texttt{Meta}_{B}(M)$ is in $P$, and (2) there is a class of $P$-complete problems that can be converted to $\texttt{Meta}_{B}(M')$ via an \emph{NC}-reduction, i.e., by constructing $M'$ appropriately using a polynomial number of parallel Turing machines, each operating in polylogarithmic time.

The first part of the above claim follows from Theorem 2
: since SSP MDPs are solvable optimally by linear programming in polynomial time
, $\texttt{Meta}_{B}(M)$ is in $P$ if $B$ encodes a polynomial solver for linear programs.

For the second part, we perform an \emph{NC}-reduction from the class of SSP MDPs to the class of SSP MDPs-based metareasoning problems with respect to a fixed optimal polynomial-time solver $B$. Specifically, given an SSP MDP $M$ with an initial state, we show how to construct \emph{another} SSP MDP $M'$ s.t., for the optimal polynomial-time solver $B$ we describe shortly, deciding $\texttt{Meta}_{B}(M')$ is equivalent to deciding $M$. 

The intuition behind converting a given SSP MDP $M$ into $M'$, the SSP MDP that will serve as the base in our metareasoning problem, is to augment $M$ with new states \emph{where the agent can ``think'' by using a zero-cost \texttt{NOP} action} until the agent arrives at an optimal policy for the original states of $M$. Afterwards, the agent can transition from any of these newly added ``thinking states'' to $M$'s original start state $s_0$ and execute the optimal policy from there. Unfortunately, the proof is not as straightforward as it seems, because we cannot simply build $M$' by equipping $M$ with a new start state $s'_0$ with a self-loop zero-cost \texttt{NOP} action --- $M'$ with such an action would violate the SSP MDP definition. Below, we show how to overcome this difficulty. Since thinking in the newly added states of $M'$ costs nothing, the cost of an optimal policy for $\texttt{Meta}_{B}(M')$ is the same as for $M$, so deciding the former problem decides the latter.

The construction of $M'$ from a given SSP MDP $M$ is illustrated in Figure \ref{f:constr}. Consider the number of instruction-steps it takes to solve $M$ by linear programming. This number is polynomial; namely, there exists a polynomial $p_{LP}(|M|)$ that bounds $M$'s solution time from above. To transform $M$ into $M'$, we add a set of $p_{LP}(|M|)$ states, $s'_0, s'_1, \ldots, s'_{p_{LP}(|M|)}$ to $M$. These new states connect into a chain via zero-cost \texttt{NOP} actions: the start state $s'_0$ of $M'$ links to $s'_1$, $s'_1$ links to $s'_2$, and so on until $s'_{p_{LP}(|M|)}$ links to $s_0$, the start state of $M$. In addition, for all original states of $M$, we create a self-loop \texttt{NOP} action with a positive cost. The entire transformation can be easily implemented as an NC-reduction on $p_{LP}(|M|) + |S|$ computers, each recording the cost and transition function of \texttt{NOP} for a separate state. Since for each state, \texttt{NOP}'s cost and transition functions together can be encoded by just two numbers (\texttt{NOP} transition function assigns probability 1 to a single transition that is implicitly but unambiguously determined for every state), each computer operates in polylogarithmic time. Moreover, initializing each of the parallel machines with the MDP state for which it is supposed to write out the transition and cost function values is as simple as appropriately setting a pointer to the input tape, and can be done in log-space. Thus, the above procedure is a valid NC-reduction. Note also that $M'$ is an SSP MDP: although it has zero-cost actions, they do not form loops/strongly connected components.

Our motivation for constructing $M'$ as above was to provide an agent with enough states where it can ``think'' to guarantee that if the agent starts at $s'_0$,  it arrives at $M$'s initial state $s_0$ with a computed optimal policy from $s_0$ onwards. This would imply that the expected cost of an optimal policy for $\texttt{Meta}_{B}(M')$ from $s_0$ would be the same as for $M$. However, for this guarantee to hold, we need a general SSP MDP solver $B$ that can solve/decide $M'$ in time $O(poly(|M|))$, \emph{not} $O(poly(|M'|)$. The difference between $O(poly(|M'|))$ and $O(poly(|M|))$ is very important, because $M'$ is larger than $M$, so the newly added chain of states may not be enough for a $O(poly(|M'|))$ policy computation to have zero cost.  

To circumvent this issue, we define $B$ that recognizes ``lollypop-shaped'' MDPs $M'$ as in Figure \ref{f:constr}, which have an arbitrarily connected subset $S^c$ of the state space representing a sub-MDP $M^c$ preceded by a chain of \texttt{NOP}-connected states of size $p_{LP}(|M^c|)$ leading to $M^c$'s start state $s_0$, and ignores the linear chain part. (Note that the policy for the linear chain part is determined uniquely and there is no need to write it out explicitly).  For that, we assume the metareasoning problem's input SSP MDP to be in the form of a string

$$\mbox{$M^c$\_description\#\#\#chain\_description}$$

In this string, $\mbox{$M^c$\_description}$ stands for the arbitrarily connected part of the input MDP, and $\mbox{chain\_description}$ stands for the description of the linear \texttt{NOP}-connected chain. For MDPs violating conditions in Figure \ref{f:constr} (i.e., having a different connectivity structure or having the linear part of the wrong size), $\mbox{chain\_description}$ must be empty, with the entire MDP description placed before ``\#\#\#''. $B$ is defined to read that input string only up to ``\#\#\#'' and solve that part using LP.

Constructing $M'$ from $M$ and recording $M'$ in the aforementioned way ensures that the optimal policy for the metareasoning problem $\texttt{Meta}_{B}(M')$ chooses \texttt{NOP} until the agent reaches $M$'s start state $s_0$, by which point the agent will have computed an optimal policy for $M$. Coupled with the fact that $\texttt{Meta}_{B}(M')$ is in $P$, this implies the theorem's claim.

\end{proof}

\comment{
\subsection{Domains}
Here we fully describe the domains we use in our experiments, which are all constructed on top of a simple $100 \times 100$ grid. The agent can move north, south, east, and west, in order to reach the goal, defined as the northeast corner. 
\begin{itemize}
\item \emph{Stochastic} – This domain (Figure \ref{domain2}) adds winds to the grid world to be analogous to worlds with stochastic state transitions. Moving against the wind causes slower movement across the grid, whereas moving with the wind results in faster movement. The directions of the winds ensure that the initial heuristic is not optimal. Thus, metareasoning has significant potential to improve upon the initial heuristic policy. 
The agent’s initial state is the southeast corner and the goal is located in the northeast corner. We set the parameters of the domain as follows so that there is a policy that can get the agent to the goal with a small number of steps (in tens instead of hundreds) and the winds significantly affect the number of steps needed to get to the goal: The agent can move 11 cells at a time, and the wind has a pushing power of 10 cells. The next location of the agent is simply determined by adding the agent's vector and the wind's vector except when the agent decides to think (executes \texttt{NOP}), in which case it stays in the same position. Thus, the winds can never push the agent in the opposite direction of its intention. The prevailing wind direction over most of the grid is northerly, except for the column of cells containing the goal and starting position, where it is southerly. The southerly wind direction makes the initial heuristic suboptimal. To simulate stochastic state transitions, the winds have their prevailing direction in a given cell with 60\% probability; with 40\% probability they have a direction orthogonal to the prevailing one (20\% easterly and 20\% westerly). In this domain, the Manhattan heuristic encourages the agent to go against the wind at the start state and is therefore expected to perform poorly, so it is beneficial for the agent to find another policy in order to reach the goal more quickly. 

We perform a set of experiments on this domain, the simplest domain we have,  to observe the effect of different costs for thinking and acting on the behaviors of algorithms. We vary the cost of thinking and acting between 1 and 15. When we vary the cost of thinking, we fix the cost of acting at 11, and when we vary the cost of acting, we fix the cost of thinking at 1.
\item \emph{Traps} – This domain modifies the \emph{Stochastic} domain to resemble the setting where costs for thinking and acting are not constantamong states. To simplify the parameter choices, we fix the cost of thinking and acting to be equal, respectively, to the agent's moving distance and wind strength. Thus, the cost of thinking is 10 and the cost of acting is 11. To vary the costs of thinking and acting between states, we make thinking and acting at the initial state extremely expensive at a cost of 100, about 10 times the cost of acting and thinking in the other states. Thus, the agent is forced to think outside its initial state in order to perform optimally. 
\item \emph{DynamicNOP-1} – In the previous domains, executing a \texttt{NOP} does not change the agent's state. In this domain (Figure \ref{domain3}), thinking causes the agent to move in the direction of the wind, causing the agent to stochastically transition as a result of thinking. Having this stochastic behavior makes thinking more dangerous as it can move the agent away from the goal. In this domain, the cost of thinking is composed of both explicit and implicit components; a static value of 1 unit and a dynamic component determined by stochastic state transitions as a result of thinking. The static value is set to 1 so that the dynamic component can dominate the decisions about thinking. The agent starts in cell $(98,1)$. We change the wind directions so that there are easterly winds in the most southern row and northerly winds in the most eastern row that can push the agent very quickly to the goal. Westerly winds exist everywhere else, pushing the agent away from the goal. We change the stochasticity of the winds so that the westerly winds change to northerly winds with 20\% probability, and all other wind directions are no longer stochastic. We lower the amount of stochasticity to better see if our agents can reason about the implicit costs of thinking. The wind directions are arranged so that there is potential for the agent to improve upon its initial policy but thinking is risky as it can move the agent to the left region. The left region is hard to recover from since all winds push the agent further from the goal.  
\item \emph{DynamicNOP-2} – This domain is just like the previous domain, but we change the direction of the winds in the northern-most row to be easterly (Figure \ref{domain4}). These winds also do not change directions. The difference if this domain from the previous is that it is less risky to take a thinking action; even when the agent is pushed to the left region of the board, the agent can find strategies to get to the goal quickly by utilizing the easterly wind at the top region of the board. 
\end{itemize}
\begin{figure}
\centering
\includegraphics[scale = 0.45]{lavapitdomain.PNG}-\caption{Wind layout for Stochastic and Traps}
\label{domain2}
\end{figure}
\begin{figure}
\centering
\includegraphics[scale = 0.45]{domain3.PNG}
\caption{Wind layout for DynamicNOP-1}
\label{domain3}
\end{figure}
\begin{figure}
\centering
\includegraphics[scale = 0.45]{domain4.PNG}
\caption{DynamicNOP-2 differs from DynamicNOP-1 in that we add an Easterly jetstream in the Northernmost row.}
\label{domain4}
\end{figure}
}

\subsection{More Figures}
Figures \ref{winds-dCOT-1} through \ref{winds-dCOA-15} show results for the \emph{Stochastic} domain where we vary the cost of thinking and the cost of acting. 

\comment{Figure \ref{winds-dCOA-average} shows the result of averaging over varying the cost of acting. We see that as we increase the cost of thinking or decrease the cost of acting, the baselines that favor small amounts of thinking do better. We also see that our algorithm, \texttt{Metareasoner}, always defeats all other algorithms. It is able to adapt to the changing costs, by thinking less when thinking costs relatively more and thinking more when it costs relatively less.

Figure \ref{lavapit} shows the results in \emph{Traps}. We see that the \texttt{Prob} baseline performs fairly well when $p$ is low, since it can quickly get out of the initial state. On the other hand, the \emph{Think*Act} baseline performs poorly even at low settings of $n$ thinking cycles, because it cannot think in safer zones and refine its policy. \emph{Metareasoner} does very well, since it not only figures out that it must leave the initial state, but it also determines when thinking is valuable in other states, unlike \texttt{Prob}.

Figures \ref{fireants1} and \ref{fireants2} show the results in the two \emph{DynamicNOP} domains. In \emph{DynamicNOP-1}, \texttt{Heuristic} (which encourages only North and East moves) performs very well, because in half of the cases, it is able to move into a jetstream that quickly pushes it towards the goal. \emph{Think*Act} also performs well, since at low $n$, it is pushed away from the goal only a little, and then uses the heuristic to head back. As we have discussed in the main text, our algorithms do not do well because each time \emph{Metareasoner} thinks, it is pushed away from the goal. Moreover, it gets pushed into states that it has never seen before and must therefore think again, and so it keeps thinking and getting pushed away from the goal. As we would expect, the metareasoning gap in \emph{DynamicNOP-1} is quite small compared to the other domains. However, with the addition of additional winds in the northerly row, we see in \emph{DynamicNOP-2} that our algorithms perform much better. Now, even if our algorithms do not discover the southerly and easterly jetstreams from initial thinking, they are provided more chances to recover when they get stuck. Thinking is more valuable when there are more opportunities for thinking to help, so thinking by our algorithms is less likely to be wasted. Now, our algorithms can better tradeoff the value of thinking with the value of acting.  Interestingly, the metareasoning gap is decreased at the initial state by the addition of this northerly jetstream. However, the metareasoning gap at many other states in the domain is increased, showing that while the metareasoning gap at the initial state is not an ideal way to characterize the potential for improvement via metareasoning in all domains. }

\begin{figure}
\centering
\includegraphics[scale = 0.45]{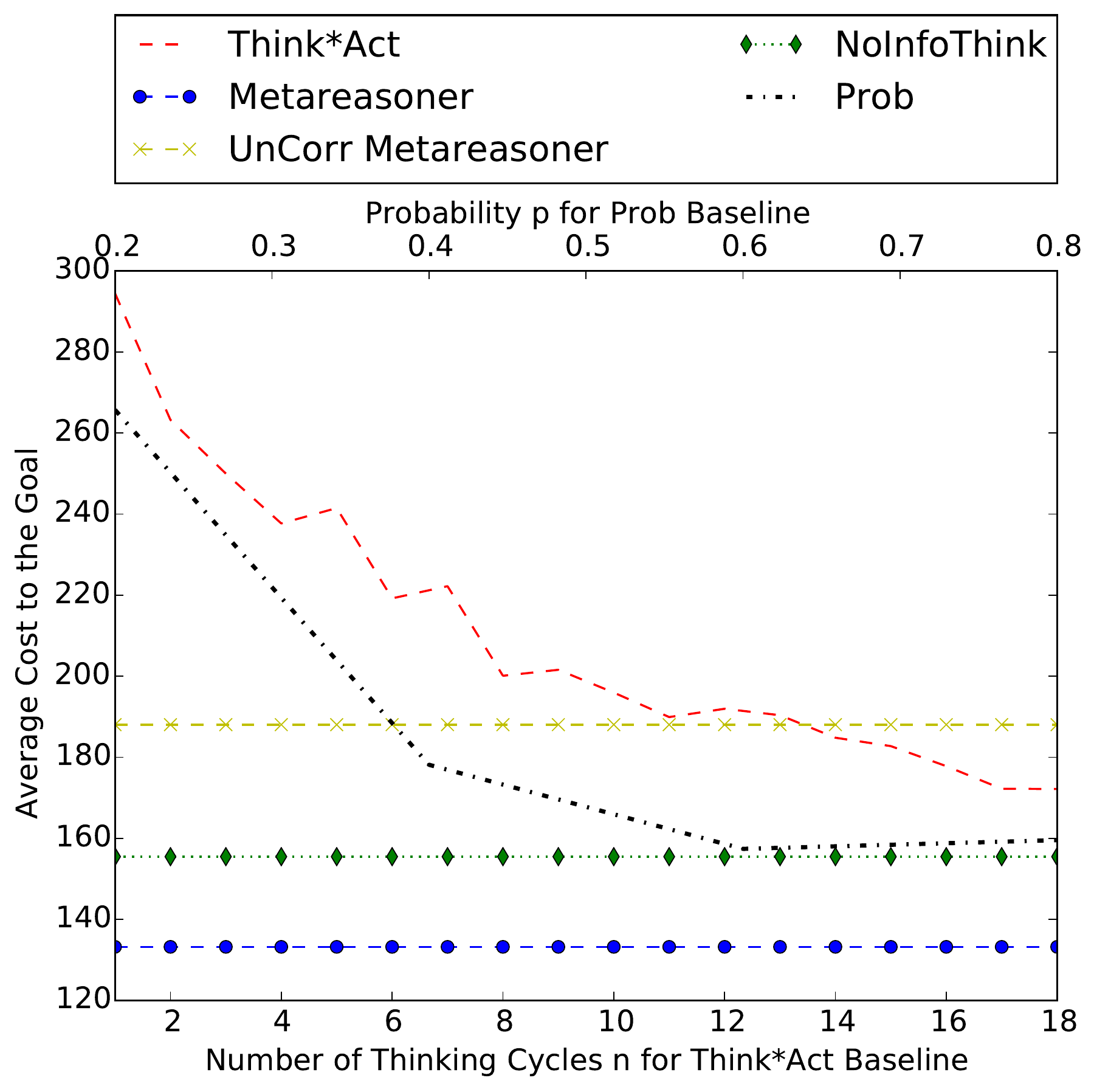}
\caption{Comparison of algorithms in \emph{Stochastic}, with the cost of thinking = 1 and cost of acting = 11}
\label{winds-dCOT-1}
\end{figure}
\begin{figure}
\centering
\includegraphics[scale = 0.45]{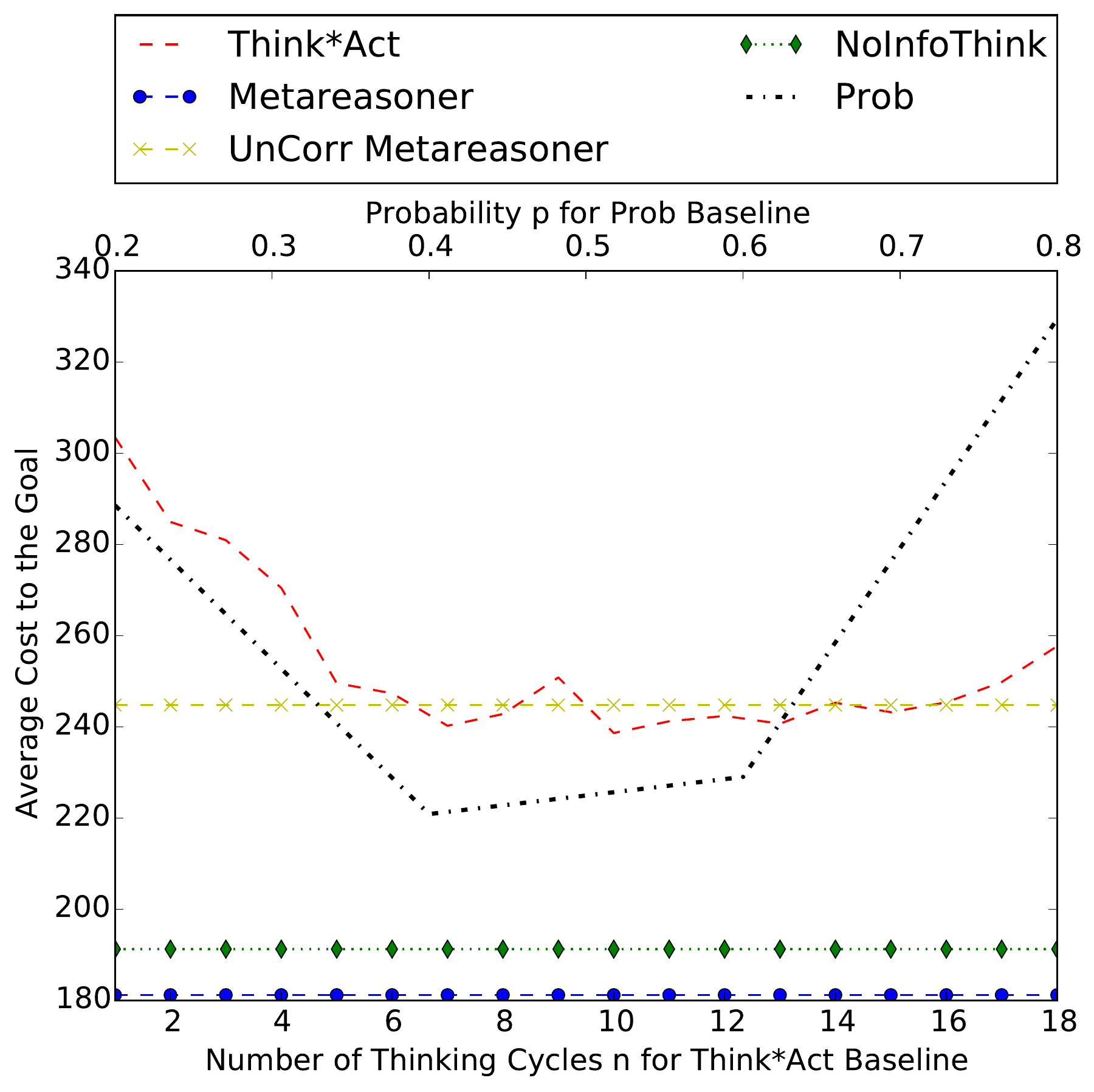}
\caption{Comparison of algorithms in \emph{Stochastic}, with the cost of thinking = 5 and cost of acting = 11}
\label{winds-dCOT-5}
\end{figure}
\begin{figure}
\centering
\includegraphics[scale = 0.45]{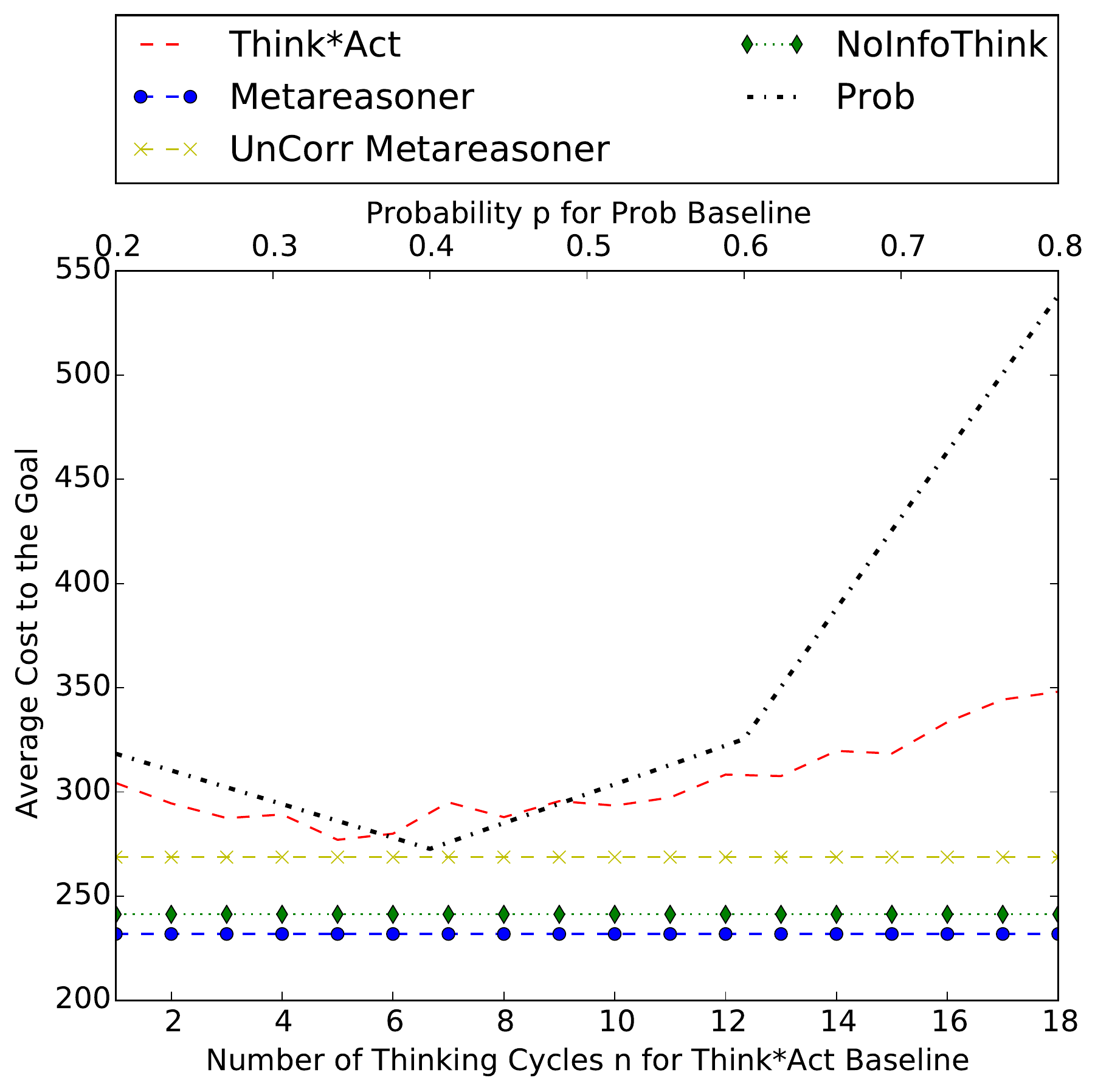}
\caption{Comparison of algorithms in \emph{Stochastic}, with the cost of thinking =10 and cost of acting = 11}
\label{winds-dCOT-10}
\end{figure}
\begin{figure}
\centering
\includegraphics[scale = 0.45]{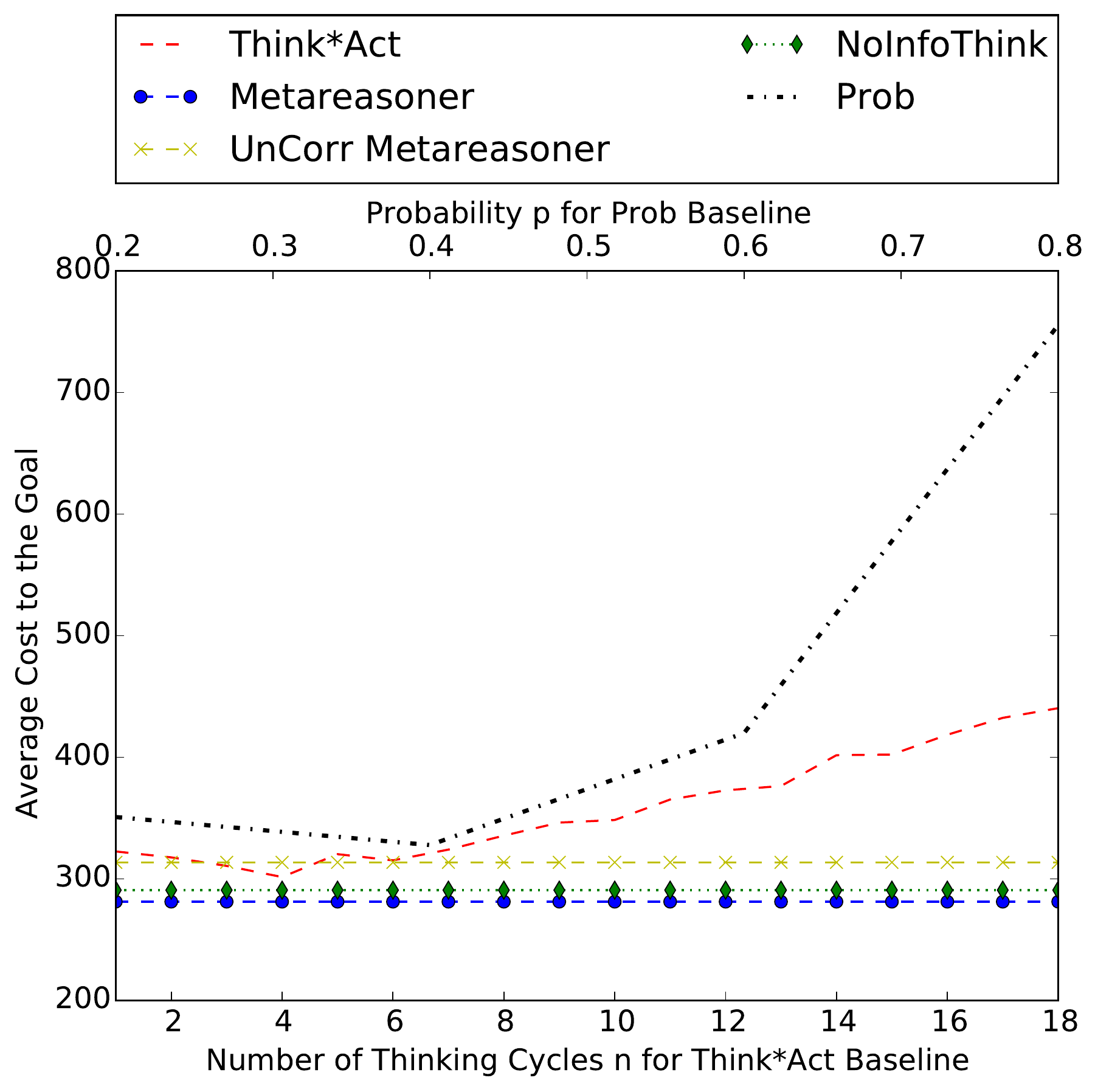}
\caption{Comparison of algorithms in \emph{Stochastic}, with the cost of thinking = 15 and cost of acting = 11}
\label{winds-dCOT-15}
\end{figure}
\begin{figure}
\centering
\includegraphics[scale = 0.45]{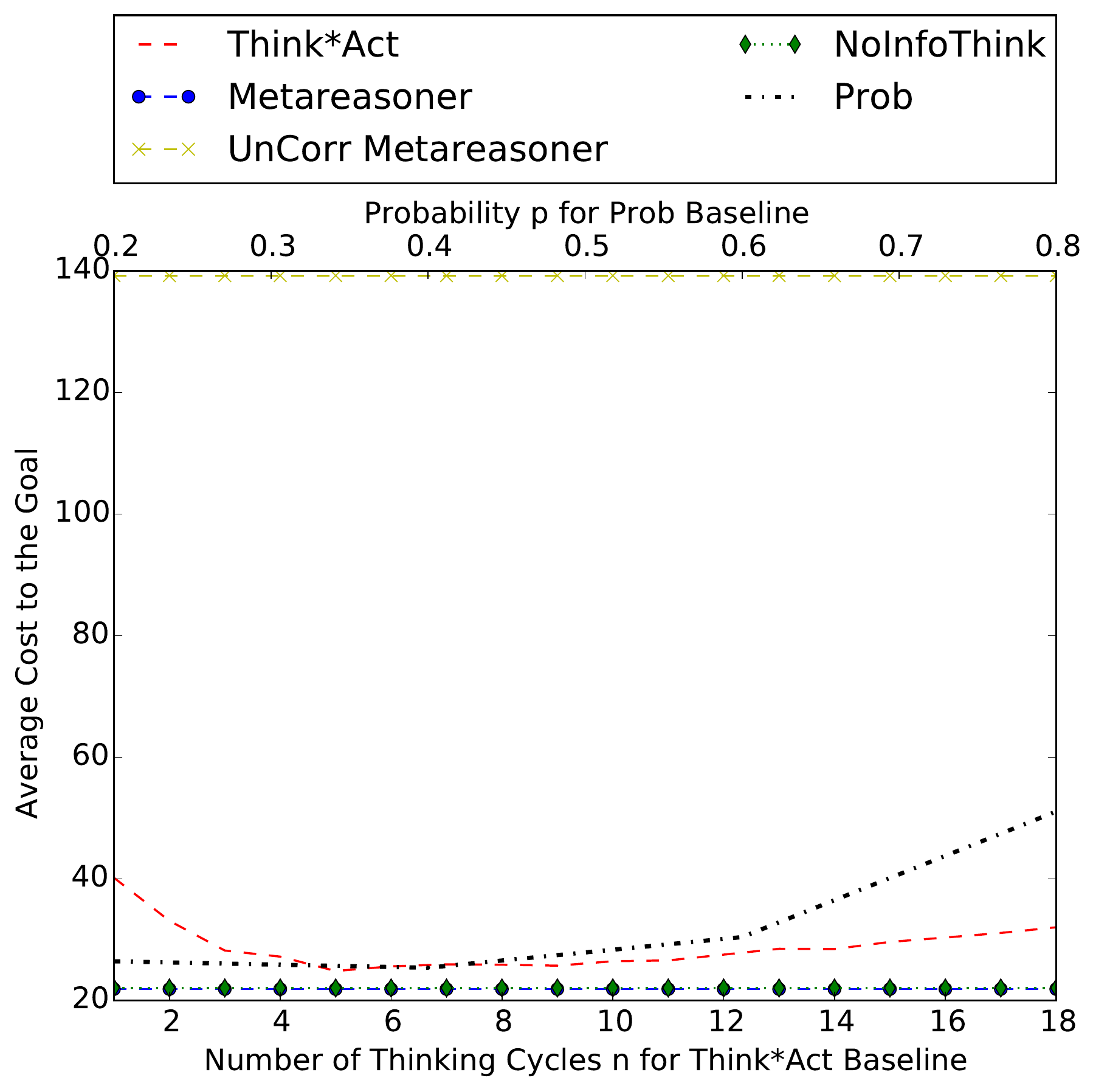}
\caption{Comparison of algorithms in \emph{Stochastic}, with the cost of acting = 1 and cost of thinking = 1}
\label{winds-dCOA-1}
\end{figure}
\begin{figure}
\centering
\includegraphics[scale = 0.45]{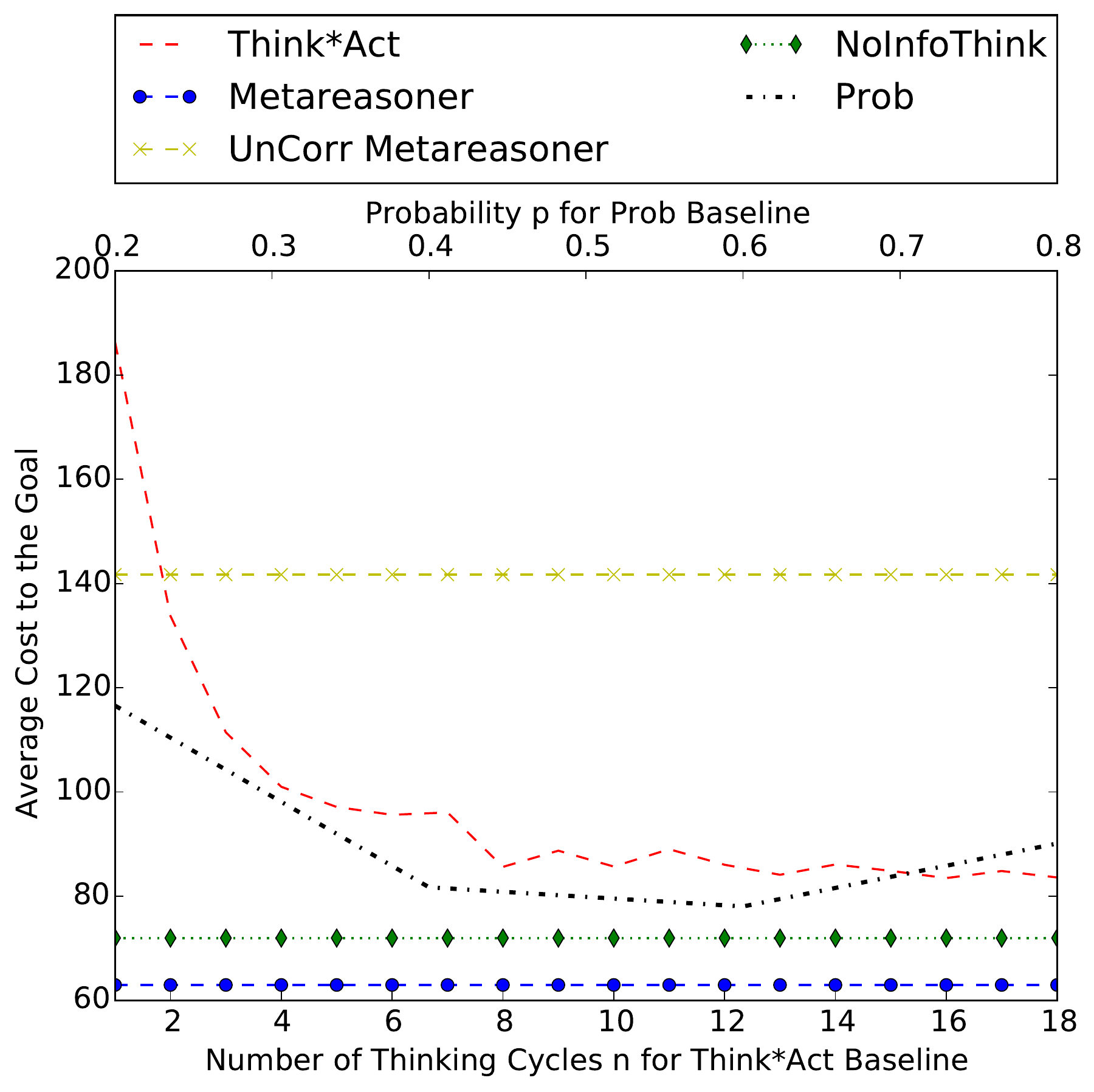}
\caption{Comparison of algorithms in \emph{Stochastic}, with the cost of acting = 5 and cost of thinking = 1}
\label{winds-dCOA-5}
\end{figure}
\begin{figure}
\centering
\includegraphics[scale = 0.45]{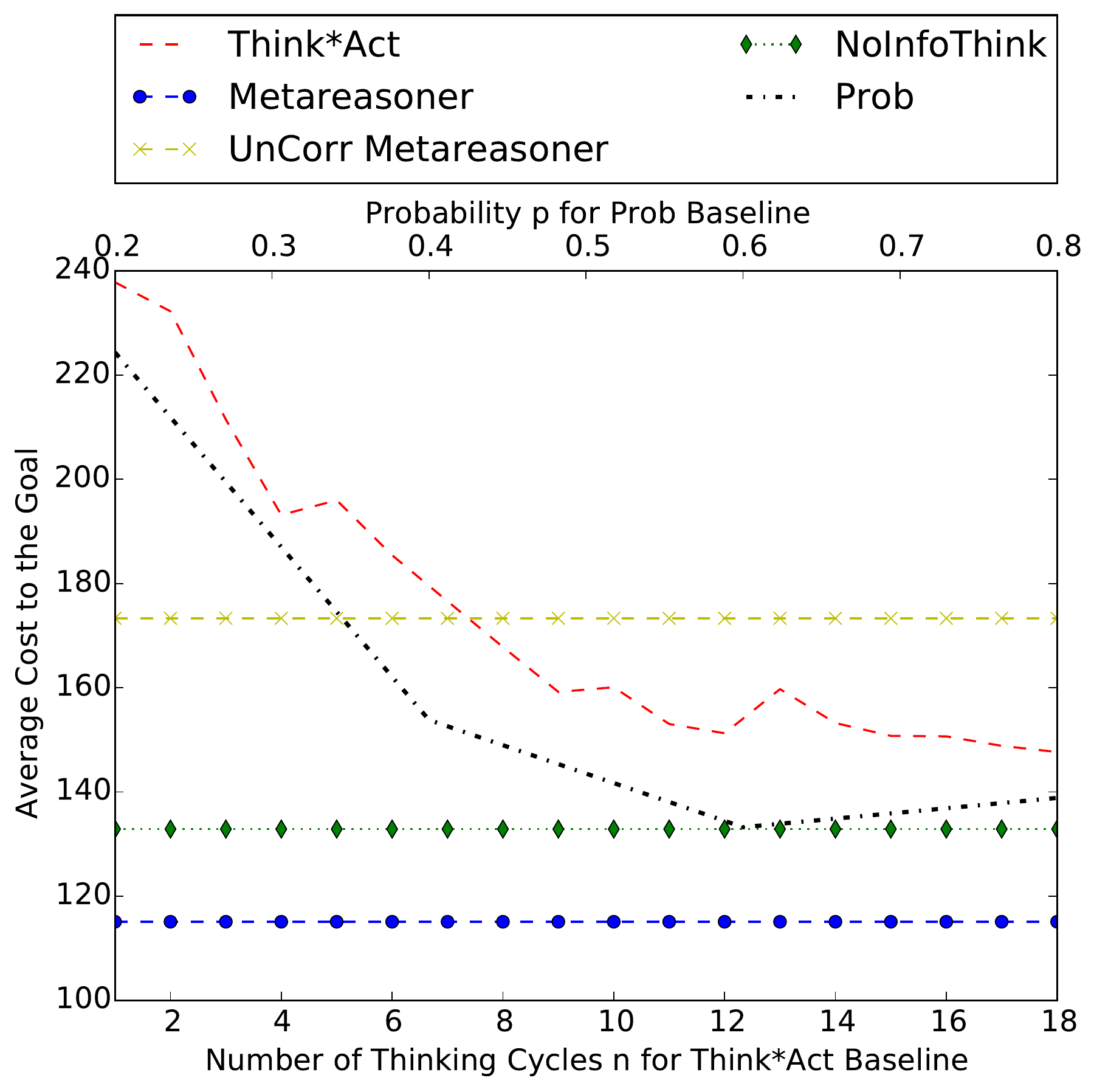}
\caption{Comparison of algorithms in \emph{Stochastic}, with the cost of acting = 10 and cost of thinking = 1}
\label{winds-dCOA-10}
\end{figure}
\begin{figure}
\centering
\includegraphics[scale = 0.45]{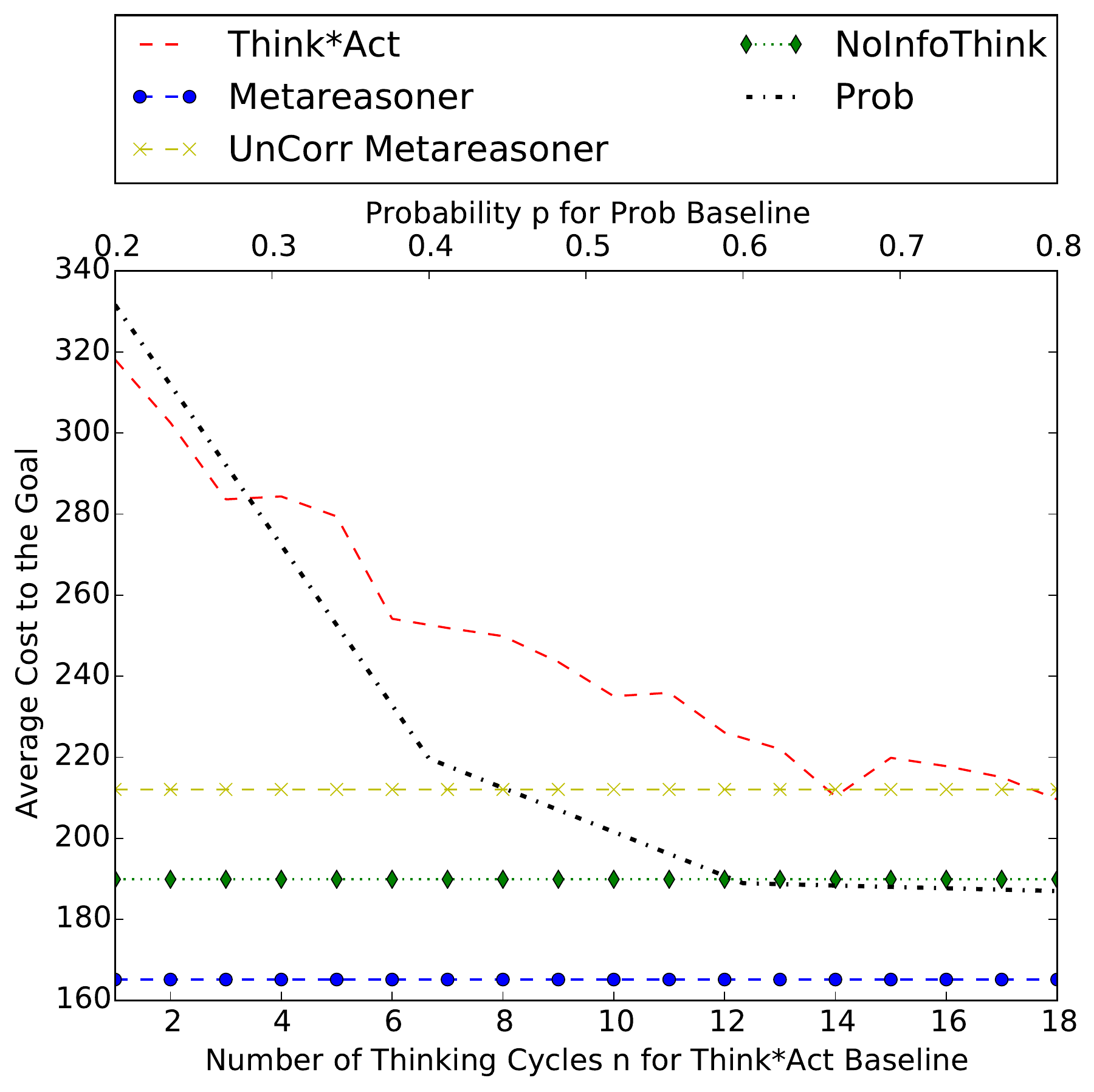}
\caption{Comparison of algorithms in \emph{Stochastic}, with the cost of acting = 15 and cost of thinking = 1}
\label{winds-dCOA-15}
\end{figure}
\comment{
\begin{figure}
\centering
\includegraphics[scale = 0.45]{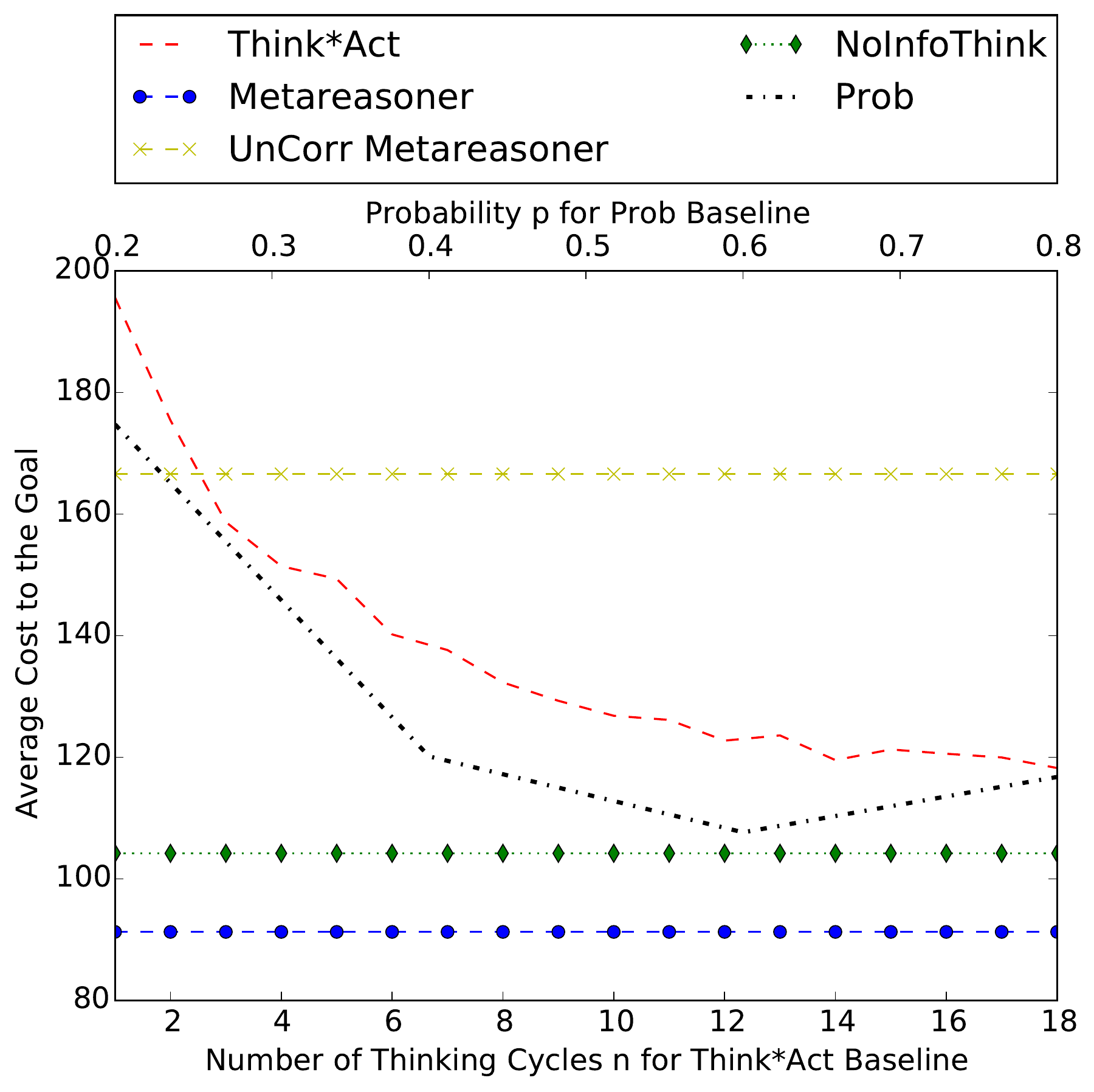}
\caption{When its performance is averaged over all settings of the cost of acting in \emph{Stochastic} (and the cost of thinking is fixed at 1), our algorithm yields a lower cost to the goal than any of the baselines with any parameter settings}
\label{winds-dCOA-average}
\end{figure}
\begin{figure}
\centering
\includegraphics[scale = 0.45]{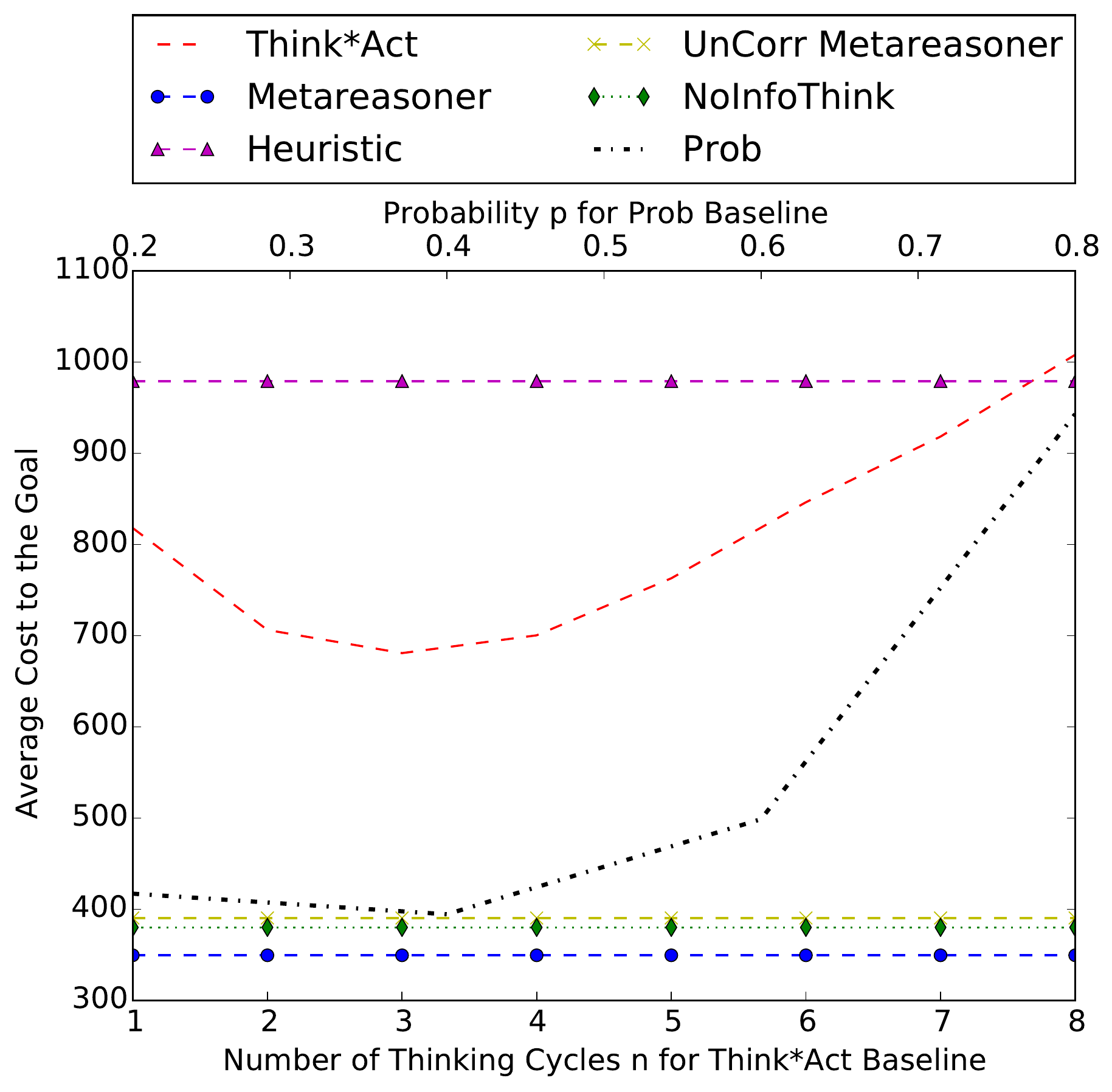}
\caption{Comparison of algorithms with baselines at the \emph{Traps} domain.}
\label{lavapit}
\end{figure}
\begin{figure}
\centering
\includegraphics[scale = 0.45]{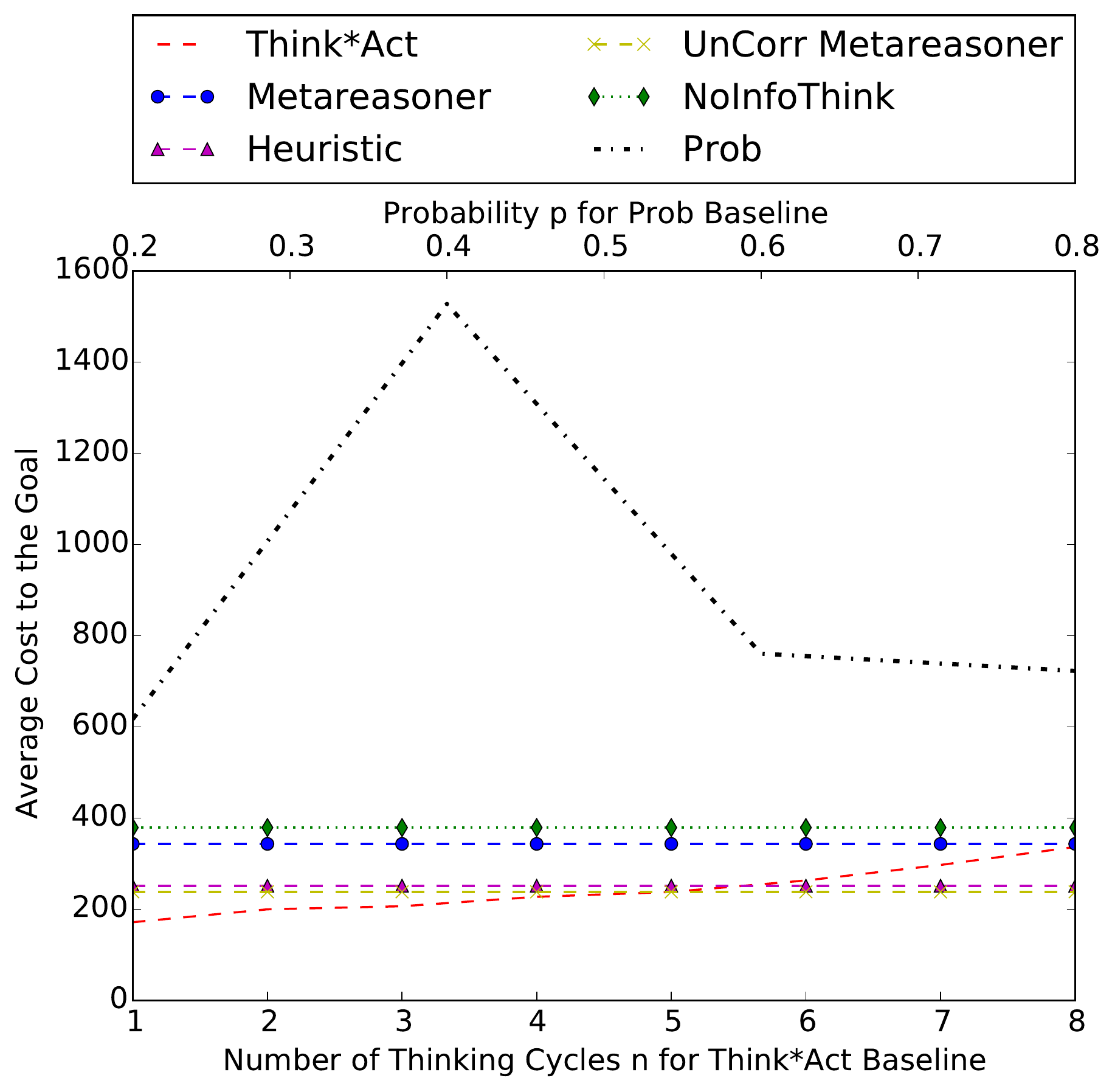}
\caption{Comparison of algorithms in \emph{DynamicNOP-1}.}
\label{fireants1}
\end{figure}
\begin{figure}
\centering
\includegraphics[scale = 0.45]{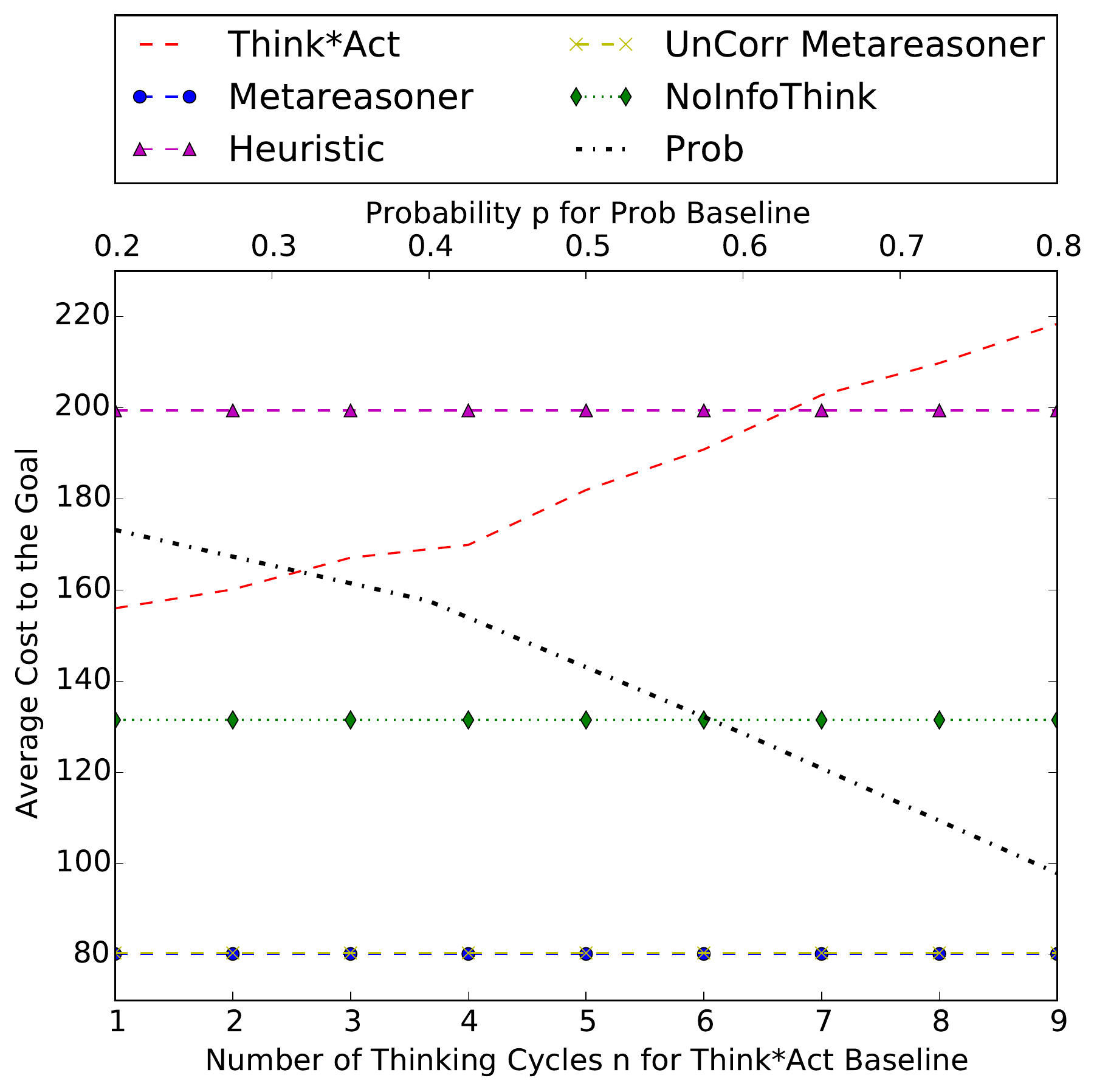}
\caption{Comparison of algorithms in \emph{DynamicNOP-2}.}
\label{fireants2}
\end{figure}}

\end{document}